\newtheorem{myDef}{Definition}
\newcommand*{\addFileDependency}[1]{
  \typeout{(#1)}
  \@addtofilelist{#1}
  \IfFileExists{#1}{}{\typeout{No file #1.}}
}
\newtheorem*{mydef*}{Definition}
\newtheorem{theorem}{Theorem}
\newtheorem{lemma}{Lemma}
\newtheorem{prop}{Proposition}
\newtheorem*{prop*}{Proposition}
\newtheorem*{problem*}{Problem}
\newcommand{\CA}{\mathcal{A}}
\newcommand{\CL}{\mathcal{L}}
\newcommand{\CH}{\mathcal{H}}
\newcommand{\CG}{\mathcal{G}}
\newcommand{\CF}{\mathcal{F}}
\newcommand{\CW}{\mathcal{W}}
\newcommand{\BE}{\mathbb{E}}
\newcommand{\BR}{\mathbb{R}}
\newcommand{\SP}{\mathscr{P}}
\newcommand{\SD}{\mathscr{D}}
\newcommand{\sh}{_{\#}}
\newcommand{\argmin}{\mathop{\mathrm{argmin}}}
\newcommand{\NN}{\mathcal{NN}}
\newcommand{\dd}{\mathrm{d}}
\title{Theoretical Insights into CycleGAN: Analyzing Approximation and Estimation Errors in Unpaired Data Generation}
\author{SUN Luwei \thanks{SUN Luwei is with the Department of Mathematics, City University of Hong Kong, Kowloon, Hong Kong (email:luweisun2-c@my.cityu.edu.hk).}, ~SHEN Dongrui\thanks{SHEN Dongrui is with the Department of Mathematics, City University of Hong Kong, Kowloon, Hong Kong (email:dongrshen2-c@my.cityu.edu.hk).}~ and ~FENG Han\thanks{FENG Han is with the Department of Mathematics, City University of Hong Kong, Kowloon, Hong Kong (email:hanfeng@cityu.edu.hk).}}
\date{\vspace*{-1cm}}
\begin{document}

\maketitle

\begin{abstract}
    In this paper, we focus on analyzing the excess risk of the unpaired data generation model, called CycleGAN. Unlike classical GANs, CycleGAN not only transforms data between two unpaired distributions but also ensures the mappings are consistent, which is encouraged by the cycle-consistency term unique to CycleGAN.
    The increasing complexity of model structure and the addition of the cycle-consistency term in CycleGAN present new challenges for error analysis. By considering the impact of both the model architecture and training procedure, the risk is decomposed into two terms: approximation error and estimation error. These two error terms are analyzed separately and ultimately combined by considering the trade-off between them. Each component is rigorously analyzed; the approximation error through constructing approximations of the optimal transport maps, and the estimation error through establishing an upper bound using Rademacher complexity. Our analysis not only isolates these errors but also explores the trade-offs between them, which provides a theoretical insights of how CycleGAN's architecture and training procedures influence its performance.
\end{abstract}

\section{Introduction}
With the development of deep learning, Generative adversarial networks (\textbf{GANs})\cite{goodfellow2014generative,arjovsky2017wasserstein} have become popular for their remarkable contribution to the improvement of deep generative models and have received substantial interest in recent years. Compared to the classical density estimation methods, the GANs learn the data distribution by training a generator and a discriminator against each other. GANs-related models are popularly used in image synthesis, such as image generation\cite{jin2017towards,karras2017progressive,radford2015unsupervised} and translation\cite{Yoo2016PixelLevelDT,wang2016generative,wang2018high,zhang2017stackgan}. The image-to-image translation\cite{isola2017image} is learning the mapping from input one image to one output image by a training set with aligned input and output images. Recent studies in computer vision \cite{eigen2015predicting,johnson2016perceptual,zhang2016colorful} promote powerful improvement in image-to-image translation in the supervised setting. However, problems still exist with the limited paired training sets. In the practical scenario, the paired training data for the segmentation tasks is relatively small in areas such as medical images. Additionally, the paired training set is not defined for other novel translations, e.g., the translation in artistic style and object transfiguration. The limitations in paired training data lessen the flexibility in image-to-image translation. There is a further problem in solving the unpaired image-to-image translation, where no matches are provided between the training domains of the input and output. The Cycle-Consistent adversarial networks (\textbf{CycleGAN})\cite{zhu2017unpaired} provide a solution inspired by the structure of GANs. Traditional GANs train the generator to guarantee the target distribution with a given distribution (e.g., Gaussian distribution). Other than GANs, CycleGAN considers the translation between two unpaired datasets, which means constructing the generation between two unknown distributions. Lacking supervision in the paired training data sets, CycleGAN trains two inverse translators between two unpaired training sets and introduces the cycle consistency loss\cite{zhou2016learning} to confirm the two mappings are bijections. Without paired training examples, CycleGAN can identify unique characteristics of the input set of images and determine how these characteristics can be transformed to match the other set of images. Breaking the restrictions in training data, applications of CycleGAN are various, such as transferring the style or object of an image and image enhancement. This network is also applied to enhance the performance of the translation with insufficient paired datasets.

CycleGAN applies the property of cycle consistency to the translation model by combining two traditional GAN models to construct the structure of CycleGAN (see Figure \ref{pictureCG}).
\begin{figure}[htbp]
    \centering
    \includegraphics[width=.5\textwidth]{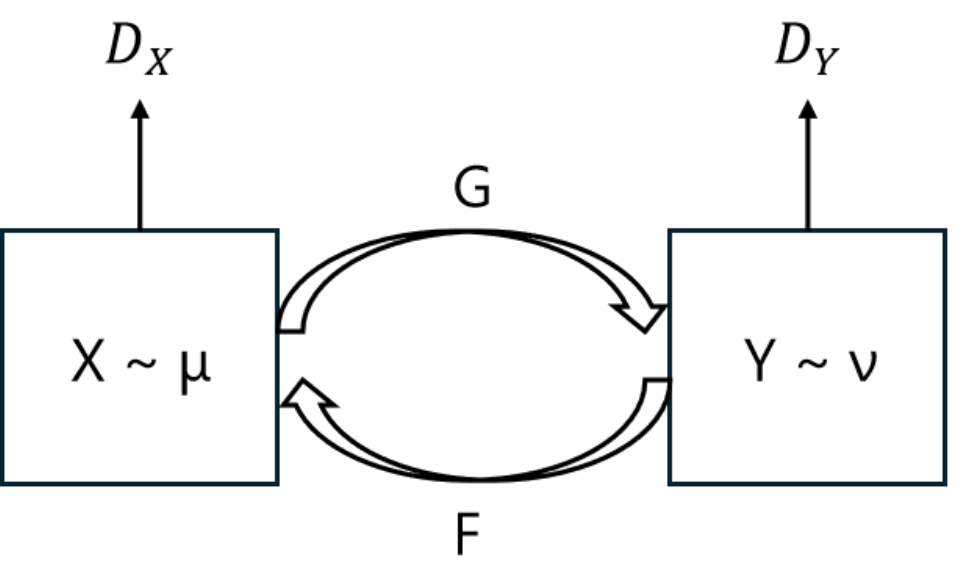}
    \caption{The general framework of CycleGAN\cite{zhu2017unpaired}.}
    \label{pictureCG}
\end{figure}
The model involves $X$ and $Y$ as the two data spaces and $\SP(X)$ and $\SP(Y)$ as the spaces of probability measures defined on $X$ and $Y$. The probability distribution at the end is denoted by $\mu \in \SP(X)$ and $\nu \in \SP(Y)$. CycleGAN defines the forward generation process mapping as $G: X \rightarrow Y$ and the backward generation process as $F: Y \rightarrow X$, accompanied by two discriminators $ D_{X} $ and $ D_{Y} $. CycleGAN considers the distance between the generated distribution and the target distribution measured by the corresponding discriminators as adversarial loss and involves cycle consistency loss to ensure the mappings are consistent. In this work, we define the distance under discriminator $D_X$ between target distribution $\mu$ and generated distribution $F_{\#}\nu$ with integral probability metrics (IPM) as,
$$d_{\SD_X}(\mu,F_\#\nu) = \sup\limits_{D_X\in\SD_X} \Bigl\{\BE_{\mathbf{x}\sim\mu}[D_X(\mathbf{x})]-\BE_{\mathbf{y}\sim\nu}[D_X(F(\mathbf{y}))] \Bigr\}.$$ We discuss the optimization task of CycleGAN training with the following formulation,
    \begin{equation}
        \label{eq:cycle-op}
		\inf _{F\in\CF, G\in\CG} L(F,G) = \inf _{F\in\CF, G\in\CG} \lambda \mathcal{L}_{\text {cyc }}(\mu, \nu, F, G)+d_{\SD_X}\left(\mu, F_{\#} \nu\right)+d_{\SD_Y}\left(\nu, G_{\#} \mu\right),
    \end{equation}
    where the pre-specified parameter $\lambda>0$ controls the relative importance of cycle consistency loss to adversarial loss. In practice, we use a set of training samples $\{x_i\}_{i=1}^{n}$ from $\mu$ and $\{y_i\}_{i=1}^{m}$ from $\nu$ to evaluate the empirical distribution $\hat\mu$ and $\hat\nu$. The training of CycleGAN solves the empirical risk as follows,
     \begin{equation}
        \label{eq:cycle-em}
		\inf _{F\in\CF, G\in\CG} \hat L(F,G) = \inf _{F\in\CF, G\in\CG} \lambda \mathcal{L}_{\text {cyc }}(\hat\mu, \hat\nu, F, G)+d_{\SD_X}\left(\hat\mu, F_{\#} \hat\nu\right)+d_{\SD_Y}\left(\hat\nu, G_{\#} \hat\mu\right).
    \end{equation}
    As in the practical scenarios, the training of the CycleGAN operates on empirical distributions, so it is important to learn the excess risk brought out from the training process. In our work, we denote the $\hat F, \hat G$ as the solution of CycleGAN training (Eq.\ref{eq:cycle-em}) and analyze the excess risk defined as,
    \begin{equation}
        \label{eq:cycle-excess}
		L(\hat F,\hat G)- \inf _{F\in\CF, G\in\CG} L(F,G).
    \end{equation}

The challenges associated with training GANs are well-documented. Researchers are continuously refining the architecture and training methods of GAN models. Innovative models such as StyleGAN\cite{karras2019style,karras2020analyzing,karras2020training} and R3GAN\cite{huang2025gandeadlonglive} have been developed, significantly improving training stability in practice. On the other side, recent studies have delved deeply into the theoretical understanding of GANs. Since CycleGAN is based on the GAN framework, it inspires further analysis of CycleGAN. Some studies analyze GANs and other generative models from the perspective of optimal transport\cite{lei-no-date,chakrabarty2022translation}. The approximation error of GANs can be defined by measuring the distance to the corresponding optimal transport map. By exploiting the regularity of optimal transport\cite{villani2003,ChenShibing2018Grft}, we can estimate the approximation error of GANs with the constructive approximation techniques using deep networks\cite{yarotsky2017error,ali2021approximation,lu2021deep,gribonval2022approximation,DeVoreRonald2021Nna}. Studies of estimation error focus on the generalization properties of GANs, which analyze the capacity of GANs to learn a distribution from finite samples. Researchers analyzed estimation errors in different ways. Some researchers consider estimation error to be the convergence rate of the well-trained GAN generator. Specifically, Zhang et al.\cite{zhang2017discrimination} consider the estimation error only with the impact of the discriminator. Liang\cite{liang2021well} shows the convergence rates of learning distributions with GANs, which are constructed with different discriminators and generators. Huang et al.\cite{huang2022error} focus on the convergence rate of the generator, which is the solution of GAN training. They control generator approximation and discriminator approximation errors by constructing neural networks for approximation, as well as statistical errors using the empirical process theory. The generator approximation error vanished with a sufficiently large generator network. Ji et al.\cite{ji2021understanding}consider estimation and generalization errors in GAN training via SGM. The upper bound of the estimation error is related to the training sample and the neural network complexity of both the generator and discriminator.

Our study analyses the similarity in construction between GANs and CycleGAN and investigates the risk estimation of CycleGAN. Traditional training in GANs involves an unknown target distribution and a simple known distribution. CycleGAN is trained on two unknown distributions with no matches between them. Two generators in CycleGAN obtain two inverse processes $G: X \rightarrow Y$ and $F: Y \rightarrow X$ and these two mappings are bijections. As we study the excess risk\cite{bach2021learning}, we consider the error between the solution of CycleGAN training $L(\hat F, \hat G)$ and unconstrained optimal risk $\inf _{F\in\CF, G\in\CG} L(F, G)$, which evaluates the efficiency of the models derived from the training data applying on unseen data. We decompose the excess risk into two parts: approximation error and estimation error. The approximation error characterizes the assumptions about the modeling approach taken by the selected class of functions. The estimation error is determined by the size of the training sample set and the characteristics of both the generator and discriminator networks. In the analysis of excess risk, approximation and estimation errors exhibit an interactive relationship.

In this paper, we give a theoretical explanation of the model to illuminate the accuracy of the CycleGAN. We reformulated and decomposed the excess risk of CycleGAN. We provide an upper bound of the excess risk by considering approximation and estimation errors. For the approximation error, we explore its connection to the approximation of optimal transport maps using deep ReLU networks. For the estimation error, we derive an upper bound using Rademacher complexity, which captures the interaction between the generators and discriminators during CycleGAN training. We further utilize the covering number to refine the bound on the estimation error and estimate the Rademacher complexity. We further discuss the trade-off between approximation and estimation errors to establish the upper bound for excess risk.

The structure of this paper is as follows. In Section \ref{sec:form}, we describe the setting of our CycleGAN model and the optimization task to solve. In Section \ref{sec:err}, we decompose the excess risk into approximation and estimation errors. Specifically:
\begin{itemize}
\item In Section \ref{sec:approx-error}, we show that the approximation error can be upper bounded by the error in approximating optimal transport maps by deep ReLU networks.
\item In Section \ref{sec:esti-error}, we present an upper bound on the estimation error using Rademacher complexity and the covering number.
\item In Section \ref{sec:excess-risk}, we combine the approximation and estimation error and given that the excess risk can be bounded by $O (N^{-\frac{\alpha}{3+2d}}(\log \frac{1}{\delta})^{\frac{1}{2}})$, where $N$ is related to the sizes of the training sets, with the structure of the generators and discriminators defined properly.
\end{itemize}

\section{Preliminaries}
\label{sec:form}

\paragraph{{ReLU Neural Networks}}Let $\NN(\CW,\CL, B)$ represent the collection of all neural networks $f:\BR^d\to\BR^{d'}$ with width $\CW$, depth $\CL$,  {and norm constraint $B$. We generalize the standard fully connected neural network class by allowing the activation function $\sigma$ to apply either the ReLU function or the identity mapping to each component of its input vector. This expanded model class contains traditional networks as a special case and also allows for skip connections, such as those found in ResNet. Formally, the expanded class is defined as:} 
\begin{equation}
    \begin{gathered}
        \NN(\CW,\CL,B) :=\{\CA_{\CL} \circ \sigma \circ \CA_{{\CL-1}} \circ \sigma \circ \cdots \circ \sigma \circ \CA_{1} \circ \sigma \circ \CA_{0}: \\ 
        \Vert{(\mathbf{A}_{\CL}},{\mathbf{b}_{\CL})}\Vert_{\infty}\prod_{\ell=0}^{\CL-1}\max\{\Vert{(\mathbf{A}_{\ell}},{\mathbf{b}_{\ell})}\Vert_{\infty},1\}\le B\}
    \end{gathered}    
\end{equation}
where  {$\CA_i(\mathbf{x}):=\mathbf{A}_i\mathbf{x}+\mathbf{b}_i$} for $i=0,\dots,\CL$ are affine transforms with trainable parameters, weight matrices  {$A_i\in\BR^{d_{i}\times d_{i-1}}$} and bias vector $\mathbf{b}_i\in\BR^{d_i}$ with $d_0=d,d_{\CL}=d'$, and the activation $\sigma$ will act on each element of input vectors. The width is given by  $\CW=\max\{d_i\}_{i=1}^{\CL-1}$. For simplicity, we assume $d_1=d_2=\cdots=d_{\CL-1}=\CW$ in this work.

\paragraph{CycleGAN}CycleGAN exploits the idea that by translating an image from one domain to another and then applying the reverse transformation, the original image should be recovered. The goal is to learn maps $G$ and $F$ that produce output images distributed as target domains $\nu\in\SP(Y)$ and $\mu\in\SP(X)$, respectively. We assume $X$ and $Y$ are compact in $\BR^d$ for $d\ge 1$, and both target distributions $\mu$ and $\nu$ are absolutely continuous. The translation maps $G$ and $F$ are trained with discriminator networks $D_Y$ and $D_X$ in the adversarial manner. Also, the cycle consistency loss is introduced to regularize the model. The loss function, denoted as $L(F,G)$, is a weighted sum of the translation adversarial loss and cycle-consistency loss:
\begin{equation}
\label{equ:L1}
    L(F,G) := \lambda\CL_{cyc}(\mu,\nu,F,G) + d_{\SD_X}(\mu,F_\#\nu) + d_{\SD_Y}(\nu,G_\#\mu)
\end{equation}
where the pre-specified parameter $\lambda>0$ controls the relative importance of cycle consistency loss to adversarial loss.

Let $F\sh\nu\in\SP(X)$ and $G\sh\mu\in\SP(Y)$ be the push-forward measures of the translation map $F$ and $G$, respectively. The adversarial losses of the backward and the forward translation processes are defined with the integral probability metrics (IPM) between the target measure and the push-forward measure:
\begin{equation}
\begin{aligned}
    d_{\SD_X}(\mu,F_\#\nu) &= \sup\limits_{D_X\in\SD_X} \Bigl\{\BE_{\mathbf{x}\sim\mu}[D_X(\mathbf{x})]-\BE_{\mathbf{y}\sim\nu}[D_X(F(\mathbf{y}))] \Bigr\} \\
    d_{\SD_Y}(\nu,G_\#\mu) &= \sup\limits_{D_Y\in\SD_Y} \Bigl\{\BE_{\mathbf{y}\sim\nu}[D_Y(\mathbf{y})]-\BE_{\mathbf{x}\sim\mu}[D_Y(G(\mathbf{x}))] \Bigr\}
\end{aligned}
\end{equation}
where $\SD_X$ and $\SD_Y$ denote the discriminator function classes that correspond to the target domains $X$ and $Y$, respectively.  Let $\SD_X$ and $\SD_Y$ be the class of 1-Lipschitz functions, then $d_{\SD_X}(\mu,F_\#\nu)$ and $d_{\SD_Y}(\nu,G_\#\mu)$ degenerate into the $W_1$-distance.

The cycle-consistency loss is defined as:
\begin{equation}
    \CL_{cyc}(\mu,\nu,F,G) := \BE_{\mathbf{x}\sim\mu}\Bigl[\Vert{\mathbf{x}-F(G(\mathbf{x}))}\Vert_{1}\Bigr] + \BE_{\mathbf{y}\sim\nu}\Bigl[\Vert{\mathbf{y}-G(F(\mathbf{y}))}\Vert_{1}\Bigr]
\end{equation}
where $\Vert\cdot\Vert_1$ denotes the $\ell_1$-norm.

Suppose we have $n$ i.i.d. samples $\{\mathbf{x}_i\}_{i=1}^{n}$ from $\mu$ and $m$ i.i.d. samples $\{\mathbf{y}_j\}_{j=1}^{m}$ from $\nu$. Then, we can define the empirical loss function:
\begin{equation}
\label{equ:L2}
     \hat{L}(F,G) :=\lambda\CL_{cyc}(\hat\mu,\hat\nu,F,G) + d_{\SD_X}(\hat\mu,F_\#\hat\nu) + d_{\SD_Y}(\hat\nu,G_\#\hat\mu)
\end{equation}
where $\hat\mu:=\frac{1}{n}\sum_i\delta_{\mathbf{x}_i}$ and $\hat\nu:=\frac{1}{m}\sum_j\delta_{\mathbf{y}_j}$ are the empirical distribution of $\mu$ and $\nu$. In learning theory \cite{bach2021learning}, $L(F, G)$ and $\hat{L}(F, G)$ are referred to as the expected risk and the empirical risk, respectively.

\paragraph{Assumptions on the structure of CycleGAN} In this paper, we consider the generator neural networks of maps $F$, $G$ as $\NN(\CW_F,\CL, B_F)$, $\NN(\CW_G,\CL, B_G)$ and the discriminator neural networks of maps $D_X$, $D_Y$ as $\NN(\CW_{D_X},\CL,1)$, $\NN(\CW_{D_Y},\CL, 1)$ respectively.

\section{Error Analysis}
\label{sec:err}
We consider the following expected risk and empirical risk minimization problems:
\begin{gather}
    \tilde{F},\tilde{G}:=\argmin\limits_{F,G~\text{ReLU}}L(F,G)\\
    \hat{F},\hat{G}:=\argmin\limits_{F,G~\text{ReLU}}\hat{L}(F,G)
\end{gather}
The excess risk of $\hat{F},\hat{G}$ is equal to $L(\hat{F},\hat{G}) - L^*$, where $ L^*=\inf\limits_{F,G}L(F,G)$ for all measurable $F$ and $G$. It measures how well the models learned from training data generalize to unseen data and could be decomposed into two terms as follows:
\begin{equation}
\label{eq:de}
L(\hat{F},\hat{G}) - L^*  =  \underbrace{L(\tilde{F},\tilde{G})-L^*}_{\textit{(1)approximation error}}  +  \underbrace{ [L(\hat{F},\hat{G})-L(\tilde{F},\tilde{G})]}_{\textit{(2)estimation error}}
\end{equation}

\subsection{Approximation Error}
\label{sec:approx-error}

In this section, we aim to establish an upper bound for the approximation error. Specifically, we exploit the connection between translation loss and cycle-consistency loss, which enables us to estimate $L(\tilde{F},\tilde{G})-L^*$ by constructing ReLU networks to approximate the optimal transport map that achieves $L^*$.

First, we recall the existence result of optimal transport problems as follows.
\begin{lemma}[Brenier's theorem, \cite{villani2003}]
\label{lemma:brenier}
Let $\mu,\nu$ be two probability measures on $\BR^d$, such that $\mu$ does not give mass to small sets (those ones with Hausdorff dimension are at most $d-1$). Then there is exactly one measurable map $T$ such that $T\sh\mu=\nu$ and $T=\nabla\varphi$ for some convex $\varphi$, in the sense that any two such maps coincide $\dd\mu$-almost everywhere.
\end{lemma}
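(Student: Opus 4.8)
The plan is to realise $T$ as the essentially unique optimal transport map for the quadratic cost $c(x,y)=|x-y|^2$ and then to use convex analysis to control the set where it is well defined. First I would pass from the Monge problem $\inf\{\int_{\BR^d}|x-T(x)|^2\,\dd\mu(x):T\sh\mu=\nu\}$ to its Kantorovich relaxation $\inf\{\int|x-y|^2\,\dd\pi(x,y):\pi\in\Pi(\mu,\nu)\}$ over couplings $\pi$ of $\mu$ and $\nu$. In the situation relevant to this paper, $X,Y$ are compact, so all measures have bounded support and $c$ is bounded (in general one reduces to this case by truncation and a limiting argument). The feasible set $\Pi(\mu,\nu)$ is tight and weak-$*$ compact and $\pi\mapsto\int|x-y|^2\,\dd\pi$ is weak-$*$ lower semicontinuous, so an optimal plan $\pi^*$ exists.

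Second, I would invoke Kantorovich duality together with the fact that the support of any optimal plan is $c$-cyclically monotone. Writing $|x-y|^2=|x|^2-2\langle x,y\rangle+|y|^2$ and discarding the marginal terms (which are constant over $\Pi(\mu,\nu)$), minimising the quadratic cost is the same as maximising $\int\langle x,y\rangle\,\dd\pi$, so $\mathrm{supp}\,\pi^*$ is a cyclically monotone set in the ordinary convex-analysis sense. By Rockafellar's theorem every such set is contained in the graph of the subdifferential $\partial\varphi$ of a lower semicontinuous convex function $\varphi\colon\BR^d\to\BR\cup\{+\infty\}$; hence $\mathrm{supp}\,\pi^*\subseteq\{(x,y):y\in\partial\varphi(x)\}$. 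This reduction to convex analysis is the conceptual heart of the proof.

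Third, I would use the structural hypothesis on $\mu$. A finite convex function on $\BR^d$ is differentiable outside a Borel set contained in a countable union of Lipschitz hypersurfaces, hence a set of Hausdorff dimension at most $d-1$; since $\mu$ gives no mass to such small sets, $\nabla\varphi$ is defined $\mu$-almost everywhere, and at every differentiability point $\partial\varphi(x)=\{\nabla\varphi(x)\}$ is a singleton. Consequently the inclusion $\mathrm{supp}\,\pi^*\subseteq\partial\varphi$ forces $\pi^*=(\mathrm{id},\nabla\varphi)\sh\mu$, so $T:=\nabla\varphi$ is $\mu$-measurable, satisfies $T\sh\mu=\nu$ (read off from the second marginal of $\pi^*$), and has the claimed gradient-of-convex form.

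Finally, for uniqueness suppose $T_1=\nabla\varphi_1$ and $T_2=\nabla\varphi_2$ are two maps of the stated form with $T_i\sh\mu=\nu$. The graph of $\nabla\varphi_i$ (a $\mu$-full subset of $\partial\varphi_i$) is cyclically monotone, so each $\pi_i:=(\mathrm{id},T_i)\sh\mu$ is an optimal plan; by linearity of the cost and convexity of the feasible set, $\tfrac12(\pi_1+\pi_2)$ is also optimal, hence concentrated on $\partial\psi$ for a single convex $\psi$, which is single-valued $\mu$-a.e. Therefore $\pi_1=\pi_2=(\mathrm{id},\nabla\psi)\sh\mu$ and $T_1=T_2=\nabla\psi$ $\mu$-a.e. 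I expect the main obstacle to be Step two together with the measure-theoretic bookkeeping in Step three: making rigorous that the optimal plan concentrates on the graph of one subdifferential, and that the non-differentiability set of a convex function is genuinely $\mu$-negligible under the stated hypothesis, so that the optimal coupling is induced by a single-valued map rather than merely by a multivalued relation.
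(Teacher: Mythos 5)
The paper does not prove this lemma: it is quoted as Brenier's theorem and cited directly to Villani~\cite{villani2003}, so there is no in-paper argument to compare against. Your sketch is essentially the standard proof of that theorem (and of McCann's extension of it to the no-moment-hypothesis form actually stated here): Kantorovich relaxation and weak-$*$ compactness for existence of an optimal plan; $c$-cyclical monotonicity of its support, reduced for quadratic cost to ordinary cyclical monotonicity and fed into Rockafellar's theorem to place the support inside the subdifferential $\partial\varphi$ of a convex potential; a.e.\ differentiability of convex functions off a set contained in countably many Lipschitz hypersurfaces (hence of Hausdorff dimension at most $d-1$, hence $\mu$-null under the stated hypothesis) to turn the subdifferential relation into a single-valued map; and convexity of the feasible set plus linearity of the cost for uniqueness. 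All of these steps are correct. The one place where more care is needed than ``truncation and a limiting argument'' is the version of the statement with no moment assumption on $\mu,\nu$: the quadratic-cost Kantorovich problem can have infinite value, so one cannot literally invoke existence and cyclical monotonicity of an optimal plan; McCann's original argument works with the cyclically monotone relation and a compactness argument for monotone maps directly, bypassing the transport problem. In the setting the paper actually uses --- $X$, $Y$ compact in $\BR^d$ --- this issue does not arise, so your argument is complete as written.
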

For CycleGAN, we consider the cyclic transport problem between $\mu$ and $\nu$. Brenier's theorem guarantees the existence of the optimal transport $\mu\xrightarrow{~\nabla\varphi~}\nu\xrightarrow{~\nabla\psi~}\mu$ for some convex $\varphi$, $\psi$. Moreover, given the assumption that $\mu$ and $\nu$ have corresponding densities $f$ and $g$ with respect to Lebesgue measure, we can show that $\nabla\varphi,\nabla\psi$ are in H\"older classes $\CH^\alpha$, for $\alpha\in (1,2)$ \cite{ChenShibing2018Grft}. The existence of optimal transport map $\nabla\varphi$ and $\nabla\psi$ guarantees that the unconstrained optimal risk $L^*=\inf_{F,G} L(F,G)$ goes to 0. Thus, it only leaves to analyze $L(\tilde{F},\tilde{G})$.

\begin{lemma}[Approximation error decomposition]
\label{lemma:approx-decomp}
Assume there exists convex functions $\varphi,\psi$ such that $\nu=\nabla\varphi\sh\mu$ and $\mu=\nabla\psi\sh\nu$. Then, for any generator neural networks $F,G$,  we have:
    \begin{equation*}
        L(F,G) \le C \sum_{i=1}^{d}\Bigl[\Vert{\nabla\varphi_i-G_i}\Vert_{L_\infty(X)} + \Vert{\nabla\psi_i-F_i}\Vert_{L_\infty(Y)} \Bigr]
    \end{equation*}
     where $i$ denotes the i-th coordinate and $C$ is a constant independent of $F$ and $G$.
\end{lemma}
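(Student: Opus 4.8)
The plan is to bound each of the three terms of $L(F,G)=\lambda\CL_{cyc}(\mu,\nu,F,G)+d_{\SD_X}(\mu,F\sh\nu)+d_{\SD_Y}(\nu,G\sh\mu)$ separately, using only the push-forward identities $\nu=\nabla\varphi\sh\mu$ and $\mu=\nabla\psi\sh\nu$ together with two structural facts: every discriminator in $\SD_X,\SD_Y$ is $1$-Lipschitz, and — since $\mu,\nu$ are absolutely continuous — Lemma~\ref{lemma:brenier} forces $\nabla\varphi,\nabla\psi$ to be the essentially unique Brenier maps between $\mu$ and $\nu$, hence mutually inverse: $\nabla\psi\circ\nabla\varphi=\mathrm{id}$ $\mu$-a.e.\ and $\nabla\varphi\circ\nabla\psi=\mathrm{id}$ $\nu$-a.e. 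I will also use that $\nabla\varphi(\mathbf{x})\in Y$ for $\mu$-a.e.\ $\mathbf{x}$ and $\nabla\psi(\mathbf{y})\in X$ for $\nu$-a.e.\ $\mathbf{y}$ (immediate from the push-forward identities, since $\nu(Y)=\mu(X)=1$), and that a ReLU network $F\in\NN(\CW_F,\CL,B_F)$, being a composition of affine maps with the $1$-Lipschitz activation $\sigma$, is globally Lipschitz with a constant $\mathrm{Lip}(F)$ that is uniformly bounded over the class by a quantity depending only on $\CW_F,\CL,B_F$ (and similarly for $G$).

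For the adversarial term $d_{\SD_Y}(\nu,G\sh\mu)$, the identity $\nu=\nabla\varphi\sh\mu$ gives $\BE_{\mathbf{y}\sim\nu}[D_Y(\mathbf{y})]=\BE_{\mathbf{x}\sim\mu}[D_Y(\nabla\varphi(\mathbf{x}))]$, so for each $D_Y\in\SD_Y$,
$$\BE_{\mathbf{x}\sim\mu}\bigl[D_Y(\nabla\varphi(\mathbf{x}))-D_Y(G(\mathbf{x}))\bigr]\le\BE_{\mathbf{x}\sim\mu}\bigl\|\nabla\varphi(\mathbf{x})-G(\mathbf{x})\bigr\|_2\le\sum_{i=1}^{d}\|\nabla\varphi_i-G_i\|_{L_\infty(X)},$$
using $1$-Lipschitzness of $D_Y$ and $\|\mathbf{v}\|_2\le\|\mathbf{v}\|_1$; taking the supremum over $D_Y$ bounds $d_{\SD_Y}(\nu,G\sh\mu)$ by the right-hand side. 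The same computation with $X,Y$ and $\varphi,\psi$ interchanged yields $d_{\SD_X}(\mu,F\sh\nu)\le\sum_{i=1}^{d}\|\nabla\psi_i-F_i\|_{L_\infty(Y)}$.

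For the cycle term I handle $\BE_{\mathbf{x}\sim\mu}\|\mathbf{x}-F(G(\mathbf{x}))\|_1$ (the summand over $\nu$ is symmetric). The key point — and the only delicate one — is to split the triangle inequality through $F(\nabla\varphi(\mathbf{x}))$ rather than through $\nabla\psi(G(\mathbf{x}))$: this keeps the true map $\nabla\psi$ evaluated on its own domain $Y$, where we control $\|\nabla\psi-F\|_{L_\infty(Y)}$, instead of at $G(\mathbf{x})$, which may leave $Y$. Since $\mathbf{x}=\nabla\psi(\nabla\varphi(\mathbf{x}))$ $\mu$-a.e.\ and $\nabla\varphi(\mathbf{x})\in Y$,
$$\|\mathbf{x}-F(G(\mathbf{x}))\|_1\le\bigl\|\nabla\psi(\nabla\varphi(\mathbf{x}))-F(\nabla\varphi(\mathbf{x}))\bigr\|_1+\bigl\|F(\nabla\varphi(\mathbf{x}))-F(G(\mathbf{x}))\bigr\|_1\le\sum_{i=1}^{d}\|\nabla\psi_i-F_i\|_{L_\infty(Y)}+\mathrm{Lip}(F)\sum_{i=1}^{d}\|\nabla\varphi_i-G_i\|_{L_\infty(X)},$$
up to the dimensional constants relating $\ell_1$ and $\ell_2$, which I absorb into the final constant. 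Taking expectations and running the mirror argument for $\BE_{\mathbf{y}\sim\nu}\|\mathbf{y}-G(F(\mathbf{y}))\|_1$ (split via $G(\nabla\psi(\mathbf{y}))$, use $\mathbf{y}=\nabla\varphi(\nabla\psi(\mathbf{y}))$ and $\nabla\psi(\mathbf{y})\in X$) gives $\CL_{cyc}(\mu,\nu,F,G)\le(1+\mathrm{Lip}(G))\sum_i\|\nabla\varphi_i-G_i\|_{L_\infty(X)}+(1+\mathrm{Lip}(F))\sum_i\|\nabla\psi_i-F_i\|_{L_\infty(Y)}$.

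Summing the three bounds and absorbing $\lambda$ and the Lipschitz constants $\mathrm{Lip}(F),\mathrm{Lip}(G)$ — which depend only on the fixed architecture parameters, not on the particular networks chosen from the classes — into a single constant $C$ yields the claim. Apart from the bookkeeping of constants, the only step requiring care is the one flagged above: arranging the triangle inequalities around $F(\nabla\varphi(\cdot))$ and $G(\nabla\psi(\cdot))$ so that each true transport map is evaluated on its own compact domain, which is precisely why we invoke the global Lipschitz continuity of the network maps and not any regularity of $\nabla\varphi,\nabla\psi$ (which is not needed for this lemma); the inverse-map identities $\nabla\psi\circ\nabla\varphi=\mathrm{id}$ and $\nabla\varphi\circ\nabla\psi=\mathrm{id}$ follow from uniqueness in Brenier's theorem and will simply be cited.
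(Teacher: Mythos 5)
Your bound on the adversarial terms matches the paper's proof essentially verbatim (push forward by the Brenier map, $1$-Lipschitzness, $L_\infty$ bound coordinatewise). The cycle-consistency term is where the two arguments diverge, and your route has a genuine gap.

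After splitting through $F(\nabla\varphi(\mathbf{x}))$, you control the residual $\|F(\nabla\varphi(\mathbf{x}))-F(G(\mathbf{x}))\|_1$ by $\mathrm{Lip}(F)\,\|\nabla\varphi(\mathbf{x})-G(\mathbf{x})\|_1$ and then absorb $\mathrm{Lip}(F)$ into the constant, claiming it depends only on the architecture parameters $\CW_F,\CL,B_F$. That is precisely the problem: for the norm-constrained class $\NN(\CW,\CL,B)$ used here, the uniform Lipschitz bound is of order $B$, and in Theorem~\ref{thm:approx-error} (the only place this lemma is used) one drives $B$ and $\CL$ to infinity to make $\|\nabla\psi_i-F_i\|_{L_\infty}$ small. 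Your ``constant'' $C$ therefore grows with $B$, which destroys the rate $\CL^{-\alpha/d}$; the lemma's intent that $C$ be a fixed constant (independent of the architecture over which one later optimizes) is not met. You also explicitly state that ``regularity of $\nabla\varphi,\nabla\psi$ is not needed for this lemma'' --- that is the missing idea. The paper inserts $\nabla\psi$ again inside the $F$-difference, writing
\begin{equation*}
\|F(\nabla\varphi(\mathbf{x}))-F(G(\mathbf{x}))\|_1\le\|F(\nabla\varphi(\mathbf{x}))-\nabla\psi(\nabla\varphi(\mathbf{x}))\|_1+\|\nabla\psi(\nabla\varphi(\mathbf{x}))-\nabla\psi(G(\mathbf{x}))\|_1+\|\nabla\psi(G(\mathbf{x}))-F(G(\mathbf{x}))\|_1,
\end{equation*}
so the Lipschitz constant that appears is $B_{\nabla\psi}=\mathrm{Lip}(\nabla\psi)$, which is fixed by the transport problem (via the $C^{2,\tau}$ regularity of Lemma~\ref{lemma:C-regularity}) and genuinely independent of $F,G$ and of the architecture. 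Using $\mathrm{Lip}(\nabla\psi)$ instead of $\mathrm{Lip}(F)$ is exactly what makes the constant acceptable; the regularity of the Brenier maps is therefore essential to this lemma, not optional.

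One thing you did flag correctly: your split keeps $\nabla\psi$ evaluated only on $Y$, whereas the paper's third term evaluates $\nabla\psi-F$ at $G(\mathbf{x})$, which may leave $Y$, and this is passed over silently. That is a legitimate observation about the paper's argument (it implicitly requires $G(X)\subset Y$, or an extension of $\nabla\psi-F$ off $Y$), but it does not rescue your version: to get a usable constant you still need to pay with $\mathrm{Lip}(\nabla\psi)$ rather than $\mathrm{Lip}(F)$, and that forces the evaluation at $G(\mathbf{x})$ one way or another. The repair is to combine both ideas: restrict or clip $G$ so that $G(X)\subset Y$ (and symmetrically $F(Y)\subset X$), then use the paper's three-term split so that only $B_{\nabla\psi},B_{\nabla\varphi}$ enter the constant.
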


Lemma \ref{lemma:approx-decomp} shows that $L(\tilde{F},\tilde{G})$ can be further bounded by the approximation error of the forward and backward translation processes, enabling us to extend the approximation theorem for deep ReLU neural networks to the CycleGAN structure.
Recall that the optimal transport maps $\nabla\psi,\nabla\varphi$ are in H\"older classes $\CH^{\alpha}$. The $L_\infty$-approximation rate for H\"older functions has been obtained in \cite{jiao2023approximation} using wide neural networks with norm constraints. In particular, the optimal approximation rates using shallow neural networks has been discussed in \cite{yang2024shallow}. We thus develop an analogous result using deep neural networks, i.e. if $\alpha<(d+3)/2$ and $d>3$, we can construct ReLU neural networks  {$f\in\NN(d+2,\CL,B)$} such that:
\begin{equation*}
\sup _{h \in \mathcal{H}^\alpha} \|h-f\|_{L^{\infty}\left(\Omega\right)} \lesssim \CL^{-\frac{\alpha}{d}} \vee B^{-\frac{2 \alpha}{d+3-2 \alpha}}
\end{equation*}
where $X\lesssim Y$ (or $Y
\gtrsim X$) denotes the statement that $X\leq CY$ for some $C>0$. See details in Appendix \ref{sec:appx-appr}.

By combining the lemmas we have discussed so far, we derive the approximation error rate for CycleGAN.

\begin{theorem}
\label{thm:approx-error}
 Let $X, Y$ be the unit cube $[0,1]^d$ in $\mathbb{R}^d$ with $d>3$. We can construct ReLU networks $G$ and $F$ with norm constraint $B \ge 1$, width  {$\CW \ge d^2+2d$}, and depth $2\le\CL \leq B^{(2 d)/(d+3-2 \alpha)}$, such that:
\begin{equation}
    L(\tilde{F},\tilde{G})-L^* \leq O(\CL^{-\alpha/d})
\end{equation}
where $\alpha\in(1,2)$ depends upon the smoothness of the optimal transport maps.
\end{theorem}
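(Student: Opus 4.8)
The plan is to reduce the bound to a coordinatewise approximation of the Brenier maps and then invoke the H\"older approximation rate for deep ReLU networks recalled above. First I would note that $L^{*}=0$: by Lemma~\ref{lemma:brenier} and the cited regularity theory there exist convex $\varphi,\psi$ with $\nabla\varphi\sh\mu=\nu$ and $\nabla\psi\sh\nu=\mu$, and by uniqueness of Brenier maps $\nabla\psi\circ\nabla\varphi=\mathrm{id}$ $\mu$-a.e.\ and $\nabla\varphi\circ\nabla\psi=\mathrm{id}$ $\nu$-a.e., so evaluating $L$ at $(F,G)=(\nabla\psi,\nabla\varphi)$ kills both IPM terms and the cycle-consistency term. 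Since $\tilde F,\tilde G$ minimize $L$ over ReLU networks, for \emph{any} generator networks $F,G$ that I construct I have $L(\tilde F,\tilde G)-L^{*}=L(\tilde F,\tilde G)\le L(F,G)$; hence it suffices to exhibit networks of the prescribed size with $L(F,G)=O(\CL^{-\alpha/d})$, and by Lemma~\ref{lemma:approx-decomp} it is enough to make $\sum_{i=1}^{d}\bigl[\Vert\nabla\varphi_{i}-G_{i}\Vert_{L_{\infty}(X)}+\Vert\nabla\psi_{i}-F_{i}\Vert_{L_{\infty}(Y)}\bigr]$ of that order.

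Next I would construct $F,G$ coordinatewise. Each coordinate $\nabla\varphi_{i},\nabla\psi_{i}$ lies in $\CH^{\alpha}$ with $\alpha\in(1,2)$, and since $d>3$ we have $\alpha<2\le(d+3)/2$, so the H\"older approximation result applies: for each $i$ it produces a scalar ReLU network in $\NN(2d+3,\CL,B)$ approximating the target on the unit cube with error $\lesssim\CL^{-\alpha/d}\vee B^{-2\alpha/(d+3-2\alpha)}$. Stacking the $d$ scalar networks in parallel (block-diagonal weight matrices, concatenated biases) yields $G$ and $F$ of depth $\CL$ and width $d(2d+3)=2d^{2}+3d$, which meets $\CW\ge 2d^{2}+3d$ (pad with zero units if $\CW$ is larger); since the $\ell_{\infty}$-norm of a block-diagonal matrix equals the maximum of its block norms, the per-layer multiplicative norm budget carries over, so $F,G\in\NN(\CW,\CL,B)$. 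If the output must lie in $[0,1]^{d}$ so that the $1$-Lipschitz discriminators in $d_{\SD_{X}},d_{\SD_{Y}}$ act on the intended domain, I would postcompose with the ReLU-representable clip $t\mapsto\min\{\max\{t,0\},1\}$, which costs nothing in depth/width/norm order and does not increase the error because the targets already take values in $[0,1]$.

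To conclude, the depth budget does the work: $\CL\le B^{2d/(d+3-2\alpha)}$ gives, after raising to the power $-\alpha/d$, $\CL^{-\alpha/d}\ge B^{-2\alpha/(d+3-2\alpha)}$, so the maximum in the approximation rate is attained by the depth term and each of the $2d$ coordinate errors is $\lesssim\CL^{-\alpha/d}$. Substituting into Lemma~\ref{lemma:approx-decomp} gives $L(F,G)\le C\sum_{i=1}^{d}[\cdots]\lesssim 2dC\,\CL^{-\alpha/d}=O(\CL^{-\alpha/d})$, hence $L(\tilde F,\tilde G)-L^{*}\le O(\CL^{-\alpha/d})$.

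The step I expect to be the main obstacle is the network-surgery bookkeeping: verifying that the parallel composition of the $d$ coordinate approximators simultaneously respects the width $2d^{2}+3d$ \emph{and} the multiplicative per-layer norm constraint $B$ inherited from the scalar construction, and that the clipping/extension to $[0,1]^{d}$ genuinely costs nothing in the exponents. One also has to confirm that the constant $C$ of Lemma~\ref{lemma:approx-decomp} and the implied constants in the H\"older rate are independent of $\CL$ and $B$ (they should depend only on $d$, $\lambda$, the domains, and the H\"older norms of $\nabla\varphi,\nabla\psi$), so that the final estimate is uniformly $O(\CL^{-\alpha/d})$.
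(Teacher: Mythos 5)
Your proof is correct and follows essentially the same route as the paper's: reduce via Lemma~\ref{lemma:approx-decomp} to coordinatewise $L^\infty$ approximation of the Brenier maps, invoke the H\"older rate $\CL^{-\alpha/d}\vee B^{-2\alpha/(d+3-2\alpha)}$ for norm-constrained ReLU networks of width $2d+3$, stack $d$ scalar approximators in parallel to get width $2d^2+3d$, and use the depth budget $\CL\le B^{2d/(d+3-2\alpha)}$ to make the depth term dominate. Your added bookkeeping (that $L^*=0$ via invertibility of Brenier maps, that the $\ell_\infty$ per-layer norm constraint survives block stacking, and the optional clipping to $[0,1]^d$) is accurate and fills in details the paper leaves implicit.
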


\subsection{Estimation Error}
\label{sec:esti-error}
In this section, we provide an upper bound of the estimation error. As defined in Eq.\eqref{eq:de}, the estimation error ($L(\hat{F},\hat{G})-L(\tilde{F},\tilde{G})$) characterizes the difference between the empirically trained generators and the desired generators. To analyze this difference, we introduce the further decomposition of the estimation error.
\begin{prop}
\label{prop:dec}
    The estimation error defined as $L(\hat{F},\hat{G})-L(\tilde{F},\tilde{G})$ is controlled by two statistical errors:
    \begin{equation}
    \label{equ:L4}
       \begin{aligned}
        &L(\hat{F},\hat{G})-L(\tilde{F},\tilde{G}) \\
        &\leq  ~L(\hat{F},\hat{G}) - \hat{L}(\hat{F},\hat{G}) + \hat{L}(\tilde{F},\tilde{G}) - L(\tilde{F},\tilde{G})\\
        &=
        \Bigl[d_{\SD_X}(\mu,\hat{F}_\#\nu)-d_{\SD_X}(\hat\mu,\hat{F}_\#\hat\nu) + d_{\SD_X}(\hat\mu,\tilde{F}_\#\hat\nu)-d_{\SD_X}(\mu,\tilde{F}_\#\nu)\Bigr]\\
        &\quad +\Bigl[d_{\SD_Y}(\nu,\hat{G}_\#\mu)-d_{\SD_Y}(\hat\nu,\hat{G}_\#\hat\mu) + d_{\SD_Y}(\hat\nu,\tilde{G}_\#\hat\mu)-d_{\SD_Y}(\nu,\tilde{G}_\#\mu)\Bigr]\\
        &\quad + \lambda \Bigl[\CL_{cyc}(\mu,\nu,\hat{F},\hat{G}) - \CL_{cyc}(\hat\mu,\hat\nu,\hat{F},\hat{G}) +\CL_{cyc}(\hat\mu,\hat\nu,\tilde{F},\tilde{G}) - \CL_{cyc}(\mu,\nu,\tilde{F},\tilde{G}) \Bigr] .\\
\end{aligned}
    \end{equation}
\end{prop}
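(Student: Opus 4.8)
The plan is to insert the empirical risks $\hat{L}(\hat{F},\hat{G})$ and $\hat{L}(\tilde{F},\tilde{G})$ as intermediate quantities and then exploit the optimality of $\hat{F},\hat{G}$ for the empirical problem. Concretely, I would write
\[
L(\hat{F},\hat{G})-L(\tilde{F},\tilde{G}) = \bigl[L(\hat{F},\hat{G}) - \hat{L}(\hat{F},\hat{G})\bigr] + \bigl[\hat{L}(\hat{F},\hat{G}) - \hat{L}(\tilde{F},\tilde{G})\bigr] + \bigl[\hat{L}(\tilde{F},\tilde{G}) - L(\tilde{F},\tilde{G})\bigr].
\]
Since $\tilde{F},\tilde{G}$ lie in the same ReLU network classes over which $\hat{L}$ is minimized to produce $\hat{F},\hat{G}$, the middle bracket is nonpositive, i.e.\ $\hat{L}(\hat{F},\hat{G})\le\hat{L}(\tilde{F},\tilde{G})$, which gives the first (inequality) line of the statement.

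For the stated equality, I would substitute the definitions of $L$ and $\hat{L}$ from \eqref{equ:L1} and \eqref{equ:L2} into the two surviving brackets $L(\hat{F},\hat{G}) - \hat{L}(\hat{F},\hat{G})$ and $\hat{L}(\tilde{F},\tilde{G}) - L(\tilde{F},\tilde{G})$, and regroup the resulting terms by component: those involving $d_{\SD_X}$ into the first bracket, those involving $d_{\SD_Y}$ into the second, and those involving $\CL_{cyc}$ (carrying the common factor $\lambda$) into the third. Because the three loss components enter both $L$ and $\hat{L}$ additively, this is purely a bookkeeping rearrangement and reproduces exactly the displayed three-bracket expression.

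This is the classical excess-risk decomposition, so no genuine difficulty arises. The only point deserving an explicit remark is the feasibility observation underlying the middle-bracket estimate: $\tilde{F},\tilde{G}$ must be admissible competitors in the empirical minimization, which holds because $\tilde{F},\tilde{G}$ and $\hat{F},\hat{G}$ are, by definition, minimizers over the same family of ReLU networks $\NN(\CW_F,\CL,B_F)$ and $\NN(\CW_G,\CL,B_G)$. The substantive work is deferred to the subsequent analysis, which must bound each of the two surviving brackets --- a difference between a population functional and its empirical counterpart, taken uniformly over the network class --- via Rademacher complexity and covering numbers.
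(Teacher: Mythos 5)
Your proposal is correct and follows exactly the same route as the paper: insert $\hat{L}(\hat{F},\hat{G})$ and $\hat{L}(\tilde{F},\tilde{G})$, drop the middle term by empirical optimality of $(\hat{F},\hat{G})$ over the same ReLU class, then regroup the remaining two brackets by the $d_{\SD_X}$, $d_{\SD_Y}$, and $\CL_{cyc}$ components using the additive definitions in Eq.~\eqref{equ:L1} and Eq.~\eqref{equ:L2}. Nothing further to add.
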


It leaves us to concentrate on two types of estimation error: cycle-consistency type and generalization type. For any prediction $(F,G)$,
\begin{itemize}
    \item Cycle-consistency Type: $\CL_{cyc}(\mu,\nu,F,G) - \CL_{cyc}(\hat\mu,\hat\nu,F,G)$
    \item Generalization Type: $d_{\SD_X}(\mu,\hat{F}_\#\nu)-d_{\SD_X}(\hat\mu,\hat{F}_\#\hat\nu)$ and $d_{\SD_Y}(\nu,\hat{G}_\#\mu)-d_{\SD_Y}(\hat\nu,\hat{G}_\#\hat\mu)$
\end{itemize}

According to the formulation of CycleGAN defined previously in  Section \ref{sec:form}, the generators and discriminators are described by the weight matrices and bias vectors. Since the parameters are constrained, we can provide the bounding of the generator and discriminator neural networks. We derive the upper bound of the estimation error via the Rademacher complexity. Utilizing the covering number of the generators' and discriminators' function classes to find a further estimation of the Rademacher complexity with Dudley’s entropy integral\cite{inbook,Dudley_2002}, we can describe the upper bound of the estimation error. We further find the upper bound of the covering number and get the bounding of estimation error with the training sample and the width and depth of the generators' and discriminators' networks. The proof can be found in Appendix \ref{section:1}.
\begin{theorem}
\label{thm:est2}
  Let $\mu, \nu$ be the target distribution over the compact domain $X$, $Y$ on $[0,1]^d$, given $n$ i.i.d training samples as $\left\{\mathbf{x}_i\right\}_{i=1}^{n}$ from $\mu$ and $m$ i.i.d training samples $\left\{\mathbf{y}_i\right\}_{i=1}^{m}$ from $\nu$. Let $\NN(\CW_{D_X},\CL, 1)$ and $\NN(\CW_{D_Y},\CL, 1)$ be the neural network of discriminators $D_X,D_Y$, $\NN(\CW_F,\CL,B_{F})$ and $\NN(\CW_G,\CL,B_{G})$ be the neural network of generators $F,G$ as defined in Section \ref{sec:form}. We define $\mathcal{W} := \max\{\mathcal{W}_{D_X},\mathcal{W}_{D_Y},\mathcal{W}_F,\mathcal{W}_G\}$ and $B:= \max\{ B_F, B_G\}$. Then, with a probability of $1-12 \delta$, $$L(\hat{F},\hat{G})-L(\tilde{F},\tilde{G})
          =O(B (\sqrt{\frac{\mathcal{W}^2\mathcal{L}}{m}}+\sqrt{\frac{\mathcal{W}^2\mathcal{L}}{n}}+\sqrt{\frac{\log \frac{1}{\delta}}{ m}}+ \sqrt{\frac{\log \frac{1}{\delta}}{ n}})).$$
\end{theorem}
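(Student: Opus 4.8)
The plan is to bound each of the three bracketed groups in Proposition~\ref{prop:dec} separately, reducing everything to a handful of uniform deviation terms of the form $\sup_{h\in\CH}\bigl(\BE_{\mu}[h]-\BE_{\hat\mu}[h]\bigr)$ over suitable function classes $\CH$, and then controlling each such term by a symmetrization argument together with a Rademacher/Dudley chaining estimate. First I would treat the \emph{generalization type} terms. Writing out $d_{\SD_X}(\mu,\hat F_\#\nu)-d_{\SD_X}(\hat\mu,\hat F_\#\hat\nu)$ and using the definition of the IPM as a supremum over $D_X\in\SD_X$, the difference splits (via $\sup a-\sup b\le\sup(a-b)$ and the triangle inequality) into a term comparing $\BE_{\mathbf{x}\sim\mu}[D_X(\mathbf{x})]$ with $\BE_{\mathbf{x}\sim\hat\mu}[D_X(\mathbf{x})]$ uniformly over $D_X\in\SD_X$, and a term comparing $\BE_{\mathbf{y}\sim\nu}[D_X(\hat F(\mathbf{y}))]$ with $\BE_{\mathbf{y}\sim\hat\nu}[D_X(\hat F(\mathbf{y}))]$ uniformly over $D_X\in\SD_X$ and $\hat F\in\NN(\CW_F,\CL,B_F)$. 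The same decomposition applies to the $\SD_Y$ group and (after noting the integrand $\mathbf{x}\mapsto\|\mathbf{x}-F(G(\mathbf{x}))\|_1$ is a fixed bounded function for each $(F,G)$) to the \emph{cycle-consistency type} group. So the whole estimation error is dominated by uniform deviations over four function classes: $\SD_X=\NN(\CW_{D_X},\CL,1)$, $\SD_Y=\NN(\CW_{D_Y},\CL,1)$, the composed class $\{D_X\circ F\}$, and the composed class $\{D_Y\circ G\}$, plus the cycle-consistency integrand classes $\{\mathbf{x}\mapsto\|\mathbf{x}-F(G(\mathbf{x}))\|_1\}$ and its $Y$-analogue.

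Next I would pass from these uniform deviations to expectations via the standard symmetrization inequality, so that each term is bounded (up to constants) by the Rademacher complexity $\mathfrak{R}_n(\CH)=\BE\,\sup_{h\in\CH}\frac1n\sum_i\epsilon_i h(z_i)$ of the relevant class, and simultaneously apply a bounded-differences (McDiarmid) concentration to replace the expectation by its empirical value at the price of the additive terms $\sqrt{\log(1/\delta)/n}$ and $\sqrt{\log(1/\delta)/m}$; here I use compactness of $X,Y\subseteq[0,1]^d$ and the norm constraints to bound the ranges of all integrands by a constant multiple of $B$, which is what produces the overall factor $B$ in the statement and supplies the bounded-differences constants. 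The twelve-fold failure probability $1-12\delta$ comes from a union bound over the (two IPM $\times$ two directions $\times$ two empirical/expected swaps) plus the cycle-consistency contributions — one $\delta$ per elementary deviation event.

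The remaining and most technical step is to bound the Rademacher complexities. For each class I would use Dudley's entropy integral $\mathfrak{R}_n(\CH)\lesssim \frac{1}{\sqrt n}\int_0^{\operatorname{diam}(\CH)}\sqrt{\log\CN(\CH,\varepsilon,\|\cdot\|_\infty)}\,\dd\varepsilon$ and then invoke the covering-number bounds for norm-constrained ReLU networks: $\log\CN(\NN(\CW,\CL,B),\varepsilon,\|\cdot\|_\infty)\lesssim \CW^2\CL\log(\,\text{poly}(\CW,\CL,B)/\varepsilon\,)$, which I can cite from the constructive/complexity results referenced earlier. For the composed classes $\{D_X\circ F\}$ I would use that $D_X$ is $1$-Lipschitz (so a perturbation of $F$ by $\varepsilon$ in $\|\cdot\|_\infty$ moves $D_X\circ F$ by at most $\varepsilon$), giving $\CN(\{D_X\circ F\},\varepsilon)\le \CN(\SD_X,\varepsilon/2)\cdot\CN(\NN(\CW_F,\CL,B_F),\varepsilon/2)$, hence a log-covering number again of order $\CW^2\CL$ up to logarithmic factors; the cycle-consistency integrand $\mathbf{x}\mapsto\|\mathbf{x}-F(G(\mathbf{x}))\|_1$ is handled identically using Lipschitzness of $\|\cdot\|_1$ and composition of $F$ and $G$. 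Plugging these into Dudley's integral and carrying out the (routine) integration yields $\mathfrak{R}_n(\CH)\lesssim B\sqrt{\CW^2\CL/n}$ up to logarithmic factors, and similarly $\lesssim B\sqrt{\CW^2\CL/m}$ for the $\nu$-sample terms. Summing the four groups and absorbing constants gives exactly the claimed bound $O\bigl(B(\sqrt{\CW^2\CL/m}+\sqrt{\CW^2\CL/n}+\sqrt{\log(1/\delta)/m}+\sqrt{\log(1/\delta)/n})\bigr)$. I expect the main obstacle to be the bookkeeping for the composed classes and the cycle-consistency term — in particular verifying that the $\ell_1$/Lipschitz structure really does let one decouple the covering numbers of generator and discriminator networks without an extra blow-up in the network parameters, and checking that the range and bounded-differences constants genuinely scale like $B$ rather than like $B$ times a depth- or width-dependent factor.
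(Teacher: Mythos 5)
Your plan follows essentially the same route as the paper: decompose via Proposition~\ref{prop:dec}, split each bracket into uniform deviation terms, apply McDiarmid's inequality for concentration, pass to Rademacher complexity via symmetrization, and then control the Rademacher complexity by Dudley's entropy integral using covering number bounds for norm-constrained ReLU networks (the paper's Lemmas~\ref{lemma:statistical error}--\ref{lemma:est2}). The one genuinely different technical move is your handling of the composed classes $\{D_X\circ F\}$ and the cycle-consistency integrands: you propose to decouple the covering number via the $1$-Lipschitz property of the discriminator, $\CN(\{D_X\circ F\},\varepsilon)\le\CN(\SD_X,\varepsilon/2)\cdot\CN(\CF,\varepsilon/2)$, whereas the paper's Lemma~\ref{lemma:cover_num} bounds the covering number of $\CH_1\circ\CH_2$ directly by propagating parameter perturbations through the stacked architecture and counting parameters, arriving at $M'\lesssim 3\CW_{\max}^2\CL$. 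Both give log-covering numbers of order $\CW^2\CL$, so the Dudley integral works out the same way; your version is somewhat cleaner and more modular, while the paper's version keeps explicit track of the $D^J$ blow-up in the Lipschitz constants through depth.

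The obstacle you flag at the end — whether the bounded-differences constants really scale like $B$ rather than a higher power — is real and is not fully resolved in your sketch, and the paper's treatment of it is worth knowing. The cycle-consistency integrand $\mathbf{x}\mapsto\|\mathbf{x}-F(G(\mathbf{x}))\|_1$ involves the composed generator whose range is controlled by $B_F B_G\sim B^2$, not $B$. Tracing through, the paper's Lemma~\ref{lemma:est2} actually obtains a bound of the form $C\,B\bigl((2\lambda B+2)(\sqrt{\CW^2\CL/m}+\sqrt{\CW^2\CL/n})+\cdots\bigr)$, and only by choosing $\lambda=1/B$ at the very end does the stated $O\bigl(B(\cdots)\bigr)$ rate follow. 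So your instinct is correct: without a constraint tying $\lambda$ to $B$, the cycle-consistency contribution enters at order $\lambda B^2$, and the theorem as written (which does not mention $\lambda$) silently relies on this choice. Your proof proposal would need to make that choice explicit to close the argument, and it is worth noting that this implicit dependence on $\lambda$ is a gap in the theorem statement itself, not just in your sketch.
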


\subsection{Upper Bound of Excess Risk}
\label{sec:excess-risk}
We have analyzed the bounding of the approximation and estimation errors and provided the results separately in Theorem \ref{thm:approx-error} and Theorem \ref{thm:est2}. Following the decomposition (Eq.\eqref{eq:de}), we can get the upper bound of the excess risk of CycleGAN.
\begin{theorem}
\label{thm3}
    Let $X, Y$ be the unit cube $[0,1]^d$ in $\mathbb{R}^d$ with $d>3$. Let $\mu, \nu$ denote the target distributions over $X$, $Y$, respectively. We consider $n$ i.i.d. training samples $\{\mathbf{x}_i\}_{i=1}^{n}$ drawn from $\mu$ and $m$ i.i.d. training samples $\{\mathbf{y}_i\}_{i=1}^{m}$ drawn from $\nu$. Let $\NN(\CW_{D_X},\CL, 1)$ and $\NN(\CW_{D_Y},\CL,1)$ be the neural network of discriminators $D_X,D_Y$, $\NN(\CW_F,\CL,B_{F})$ and $\NN(\CW_G,\CL,B_{G})$ be the neural network of generators $F,G$ as defined in Section \ref{sec:form}. We define $\mathcal{W} := \max\{\mathcal{W}_{D_X},\mathcal{W}_{D_Y},\mathcal{W}_F,\mathcal{W}_G\}$ and $B:= \max\{ B_F, B_G\}$. If push-forward mappings are in in H\"older classes $\CH^\alpha$, for $\alpha\in (1,2)$, then with a probability of $1-12 \delta$, for any  {$\CW\ge d^2+2d$}, and $N=\max\{m,n\}$, when $ B = N ^{\frac{d+3-2\alpha}{4d+6}}$ and  {$\mathcal{L} = N^{\frac{d}{2d+3}}$}, we have $$L(\hat{F},\hat{G}) - L^* \leq  O (N^{-\frac{\alpha}{3+2d}}(\log \frac{1}{\delta})^{\frac{1}{2}}).$$
\end{theorem}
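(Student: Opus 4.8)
The plan is to assemble the two halves of the decomposition \eqref{eq:de} and then optimize the free architectural parameters $B$ and $\CL$ as powers of $N$. The approximation error $L(\tilde{F},\tilde{G})-L^*$ is handled deterministically by Theorem \ref{thm:approx-error}: once $\CW\ge 2d^2+3d$ and $2\le\CL\le B^{2d/(d+3-2\alpha)}$, it is $O(\CL^{-\alpha/d})$. The estimation error $L(\hat{F},\hat{G})-L(\tilde{F},\tilde{G})$ is handled by Theorem \ref{thm:est2}, which holds with probability $1-12\delta$; treating $\CW$ and $d$ as fixed (absorbed into the implied constants) and $m,n$ as being of the same order as $N=\max\{m,n\}$, this reads $O\bigl(B\sqrt{\CL/N}+B\sqrt{\log(1/\delta)/N}\bigr)$. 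Since the approximation bound is deterministic, summing the two incurs no additional failure probability, so the combined bound still holds with probability $1-12\delta$.

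Next I would carry out the trade-off. Writing the excess risk as $O(\CL^{-\alpha/d})+O\bigl(B\sqrt{\CL/N}\bigr)+O\bigl(B\sqrt{\log(1/\delta)/N}\bigr)$, I want $B$ and $\CL$ chosen so that the first two terms are of the same order while the structural constraint $\CL\le B^{2d/(d+3-2\alpha)}$ coming from Theorem \ref{thm:approx-error} stays satisfied. A short computation shows that $B=N^{(d+3-2\alpha)/(4d+6)}$ and $\CL=N^{d/(2d+3)}$ make this constraint hold with equality, give $\CL^{-\alpha/d}=N^{-\alpha/(2d+3)}$, and give $B\sqrt{\CL/N}=N^{-\alpha/(2d+3)}$ as well; one also checks $B\ge 1$ and $\CL\ge 2$ for all sufficiently large $N$ (using $\alpha<2<(d+3)/2$, which holds since $d>3$), so every hypothesis of the two theorems is met. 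The residual term $B\sqrt{\log(1/\delta)/N}$ then scales like $N^{-(d+2\alpha)/(4d+6)}(\log(1/\delta))^{1/2}$, which decays strictly faster than $N^{-\alpha/(2d+3)}$ and so is absorbed. Collecting terms and using $2d+3=3+2d$ yields $L(\hat{F},\hat{G})-L^*=O\bigl(N^{-\alpha/(3+2d)}(\log(1/\delta))^{1/2}\bigr)$.

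The one genuinely delicate point is the compatibility of the two regimes. Theorem \ref{thm:approx-error} requires the depth not to exceed $B^{2d/(d+3-2\alpha)}$, whereas driving down the estimation error pushes toward small $\CL$ and moderate $B$, so it is not a priori obvious that a single scaling $(B,\CL)$ in $N$ can be simultaneously optimal for both error terms. The resolution — and the heart of the argument — is that the optimum sits exactly on the boundary $\CL=B^{2d/(d+3-2\alpha)}$; imposing this relation collapses the problem to a one-parameter optimization (in $\CL$, equivalently in $B$) whose minimizer delivers the exponent $\alpha/(3+2d)$. Everything else is bookkeeping: substituting the prescribed values, checking the admissibility conditions $\CW\ge 2d^2+3d$, $2\le\CL$, $B\ge1$, $\CL\le B^{2d/(d+3-2\alpha)}$, and confirming that the $(\log(1/\delta))^{1/2}$ factor carries the dominant stochastic dependence.
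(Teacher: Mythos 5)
Your proposal is correct and follows essentially the same route as the paper: decompose via Eq.~\eqref{eq:de}, invoke Theorem~\ref{thm:approx-error} and Theorem~\ref{thm:est2}, tie $B$ and $\CL$ together through the boundary relation $B=\CL^{(d+3-2\alpha)/(2d)}$, and then balance $\CL^{-\alpha/d}$ against $B\sqrt{\CL/N}$ to obtain $\CL=N^{d/(2d+3)}$, $B=N^{(d+3-2\alpha)/(4d+6)}$, and the rate $N^{-\alpha/(2d+3)}$. Your additional observations — that the approximation bound is deterministic so no extra failure probability accrues, that the log-term is lower order at this scaling, and that the admissibility constraints ($B\ge1$, $\CL\ge2$, $\CL\le B^{2d/(d+3-2\alpha)}$) are satisfied — are sound and slightly more careful than the paper's writeup, but do not constitute a different method.
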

\begin{proof}
    Following Eq.\eqref{eq:de}, with the result we get in Theorem \ref{thm:approx-error} and Theorem \ref{thm:est2}, we have
    \begin{equation*}
    \label{eq:excess1}
        L(\hat{G},\hat{F}) - L^*  = C_1(\CL^{-\alpha/d})+C_2(B\{\sqrt{\frac{\mathcal{W}^2\mathcal{L}}{m}}+\sqrt{\frac{\mathcal{W}^2\mathcal{L}}{n}}+\sqrt{\frac{\log \frac{1}{\delta}}{ m}}+ \sqrt{\frac{\log \frac{1}{\delta}}{ n}}\}).
    \end{equation*}
    We observe that as the depth $\mathcal{L}$ increases, the approximation error and estimation error behave in opposite directions. To establish a bound on the excess risk, we must find a balance between these two errors, which reveals the relationship between depth $\mathcal{L}$ and sample size $N$.
    
    We define $q$ such that when $B = \mathcal{L}^{\frac{d+3-2\alpha}{2d}}$ and $\mathcal{L}\geq N^q$,
    \begin{equation*}
        \begin{aligned}
            &B\sqrt{\frac{\mathcal{L}}{N}} \geq \CL^{-\alpha/d}
        \end{aligned}
    \end{equation*}

    Thus, we have $q = \frac{d}{2d + 3}$. Consequently, we can obtain the final result with a probability of $1 - 12\delta$ when $\mathcal{L} = N^{\frac{d}{2d + 3}}$,
    \begin{equation}
    \label{eq:excess2}
    \begin{aligned}
        L(\hat{G},\hat{F}) - L^*  &= C_1(\CL^{-\alpha/d})+C_2(\sqrt{\frac{\mathcal{W}^2\mathcal{L}}{N}}+\sqrt{\frac{\log \frac{1}{\delta}}{N}})\\
        &\leq O (N^{-\frac{\alpha}{3+2d}}(\log \frac{1}{\delta})^{\frac{1}{2}})
    \end{aligned}
    \end{equation}
\end{proof}
When analyzing the approximation and estimation errors independently, we observe that the depth $\mathcal{L}$ and norm constraint $B$ influences these two types of errors differently. We establish a balance and set $\mathcal{L}$ to $N^{\frac{d}{2d+3}}$ and $ B$ to $N ^{\frac{d+3-2\alpha}{4d+6}}$ . The convergence of the excess risk presented in Theorem \ref{thm3} suggests a framework for constructing efficient neural networks in CycleGAN, establishing a relationship between the network's depth and the sample size.

\section{Conclusion and Discussion}
\label{sec:result}
In the error analysis of CycleGAN, we take the unconstrained optimal risk into consideration instead of only focusing on the convergence of the estimation error. We analyze the excess risk of the CycleGAN, which characterizes the deviation of the solution we get from the training process $\hat F, \hat G$ with respect to the optimal risk for all measurable generators $F, G$. We decompose the excess risk and analyze it individually through approximation error and estimation error. By leveraging the regularity of the optimal transport of CycleGAN, we present a constructive approximation result in terms of neural network width and depth. For the analysis of estimation error, we mainly focus on the bounding of the statistical error and provide the bounding of the estimation error with the impact of the sample size and the neural network width and depth of the generators and the discriminators. The excess risk is influenced by both approximation and estimation errors. The results indicate that the depth of a neural network affects these errors in opposite ways. We establish a relationship between the size of the training samples and the neural network's depth to balance the two errors. Specifically, we demonstrate that when the width ($\CW$), depth ($\CL$) and norm constraint $B$ of the generators and discriminators in CycleGAN are defined such that $\CW \geq 2d^2 + 3d$, $B = N ^{\frac{d+3-2\alpha}{4d+6}}$ and $\CL = N^{\frac{d}{2d+3}}$, the excess risk of CycleGAN can be bounded by $ O (N^{-\frac{\alpha}{3+2d}}(\log \frac{1}{\delta})^{\frac{1}{2}})$ with probability $1 - 12\delta$. Consequently, we show that when the relationship between the depth $\CL$, norm constraint $B$ and the sample size $N$ is satisfied, the bound on the excess risk is primarily determined by the training sample size.

 {Our main result, Theorem \ref{thm3}, shows that by appropriately choosing network width, depth, and norm constraints, one can construct a CycleGAN adapted to a prescribed training sample size. This idea provides size-dependent architectural guidelines and quantifies the network’s ability to learn the underlying data distribution. Our convergence analysis follows standard statistical learning theory. Related analyses appear in \cite{liang2017well, huang2022error} restricted for Vanilla Single-domain GANs. In particular, Liang \cite{liang2017well} shows that, with appropriately chosen discriminator and generator architectures, GANs attain an upper bound of order ($N^{-\frac{\alpha+1}{2 \alpha+2+d}}$). Huang et al. (Theorem 5, \cite{huang2022error}) further show a convergence rate of order ($N^{-\beta / d} \vee N^{-1 / 2} \log ^{c(\beta, d)} N$) when the discriminator and generator depths and widths scale appropriately with the sample size (N) in evaluation class as Hölder class $\mathcal{H}^\beta\left(\mathbb{R}^d\right)$. These rates can be contrasted by fixing the sample size and varying the architectures. In our results, the excess risk of CycleGAN is bounded with probability at least ($1-12 \delta$) by $O\left(N^{-\alpha /(3+2 d)}(\log (1 / \delta))^{1 / 2}\right)$ when the network scale is appropriately designed by the sample size $N$. This finding aligns with the analytical framework and results established for GANs. However, CycleGAN’s convergence rate is not directly comparable to the rates established for standard GANs, because CycleGAN addresses unpaired image-to-image translation—a problem setting that differs from those typically analyzed for standard GANs. Because this work focuses on the standard CycleGAN \cite{zhu2017unpaired}, we defer potential architectural refinements and the incorporation of other well-studied GAN designs to future work aimed at improving convergence.}

\section*{Acknowledgement}
The authors thank the anonymous referees for their constructive comments and suggestions. We also thank Prof. Chenchen Mou for helpful discussions with him. This work is supported partially by the Research Grants Council of Hong Kong [Projects \#11306220 and \#11308121].

\newpage
\bibliographystyle{siamplain}
\bibliography{main}

\newpage
\begin{appendices}
\section{Proof}

\subsection{Proof for approximation error analysis}
\label{sec:appx-appr}
\paragraph{Remarks on Lemma \ref{lemma:brenier}: Optimal transport and Monge-Amp\`ere equation} The assumption that $\mu$ does not give mass to small sets is to guarantee the uniqueness of the optimal transport. Here we consider the case of the Lebesgue measure. Assume that $\dd\mu=f$ and $\dd\nu=g$. By \cite{villani2003}, we can show that $\varphi$ is a convex solution to a particular type of Monge-Ampe\`re equation. Since $\nu=\nabla\varphi\sh\mu$, for all bounded continuous test functions $\zeta$, we have:
    \begin{equation*}
        \int\zeta({\bf y})g({\bf y})\dd{\bf y}=\int\zeta(\nabla\varphi({\bf x}))f({\bf x})\dd{\bf x}
    \end{equation*}
Then we can perform the change of variables ${\bf y}=\nabla\varphi({\bf x})$ in the left hand side:
    \begin{equation*}
        \int\zeta({\bf y})g({\bf y})\dd{\bf y}=\int\zeta(\nabla\varphi({\bf x)})g(\nabla\varphi({\bf x}))|D^2\varphi({\bf x})|\dd{\bf x}
    \end{equation*}
Since $\zeta$ is arbitrary, it gives:
    \begin{equation}
        \label{eq:ma-brenier}
        f({\bf x})=g(\nabla\varphi({\bf x}))|D^2\varphi({\bf x})|
    \end{equation}
Eq.\eqref{eq:ma-brenier} is a specific example of the Monge-Amp\`ere equation. We rewrite it as follows:
    \begin{equation}
        \label{eq:ma-eq}
        |D^2\varphi({\bf x})|=F({\bf x})
    \end{equation}
subject to the boundary condition:
    \begin{equation}
        \label{eq:ma-bc}
        \nabla\varphi(\Omega)=\Omega^*
    \end{equation}
where $\Omega,\Omega^*$ are bounded convex domains in $\BR^d$ with $C^{1,1}$ boundary, and $F$ is a positive function. Note that the transport $\nu=\nabla\varphi\sh\mu$ is guaranteed by the natural boundary condition above. We are interested in the regularity of the Monge-Amp\`ere equation.

\begin{lemma}[The $C^{2,\tau}$ regularity for the Monge-Amp\`ere equation, \cite{ChenShibing2018Grft}]
    \label{lemma:C-regularity}
    Assume that $\Omega$ and $\Omega^*$ are bounded convex domains in $\mathbb{R}^n$ with $C^{1,1}$ boundary, and assume that $F \in C^\tau(\bar{\Omega})$ is positive for some $\tau \in(0,1)$. Let $u$ be a convex solution to Eq.\eqref{eq:ma-eq} and Eq.\eqref{eq:ma-bc}. Then we have the estimate:
    \begin{equation*}
        \|u\|_{C^{2, \tau}(\bar{\Omega})} \leq C
    \end{equation*}
    where $C$ is a constant depending only $d, \tau, f, \Omega$, and $\Omega^*$.
\end{lemma}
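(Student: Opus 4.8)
Since this estimate is the global Schauder theory for the \emph{second boundary value problem} of the Monge-Amp\`ere equation (Eq.~\eqref{eq:ma-eq}--\eqref{eq:ma-bc}) and is quoted here from \cite{ChenShibing2018Grft}, the plan is not to reprove it in full but to lay out the three layers from which such a bound is assembled. First I would fix a convex generalized (Alexandrov) solution $u$: its existence is Brenier--McCann, essentially Lemma~\ref{lemma:brenier} applied to $\mu$ and $\nu$, and because $F$ is pinched between two positive constants on $\bar\Omega$, $u$ is unique up to additive constants and its Monge-Amp\`ere measure is doubling, so the sections $S_t(x_0)=\{u<\ell_{x_0}+t\}$ (with $\ell_{x_0}$ a supporting plane at $x_0$) satisfy the engulfing and volume-growth properties that drive everything downstream.

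For the interior estimate I would follow Caffarelli's program: first strict convexity and $C^{1,\beta}_{\mathrm{loc}}$ regularity, then normalize each section by John's lemma to a set comparable to a ball, and, using the affine invariance of $\det D^2 u = F$ together with comparison against the solution of $\det D^2 v = 1$, extract an oscillation-decay estimate for $D^2 u$ on dyadic subsections. Feeding $F \in C^\tau$ into this iteration upgrades it to $u \in C^{2,\tau}_{\mathrm{loc}}(\Omega)$, with bounds depending only on $d$, $\tau$, the two-sided bounds and $C^\tau$-seminorm of $F$ (equivalently on the densities $f,g$), and the distance to $\partial\Omega$.

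The boundary estimate is where the hypotheses that $\Omega$ and $\Omega^*$ are bounded convex domains with $C^{1,1}$ boundary are essential, and I expect it to be the main obstacle. The idea is to construct at each $x_0 \in \partial\Omega$ explicit sub- and super-solutions built from the uniform interior and exterior tangent balls to $\partial\Omega$ and $\partial\Omega^*$; these trap $\nabla u$ near $\partial\Omega^*$ and force the boundary sections $S_t(x_0)\cap\Omega$ to be \emph{balanced}, i.e.\ comparable after an affine renormalization to half-ellipsoids of controlled eccentricity. This first yields $u \in C^{1,\alpha}(\bar\Omega)$, and then a boundary analogue of the section argument above --- as carried out by Caffarelli and Urbas for the Dirichlet problem and by \cite{ChenShibing2018Grft} for precisely this natural boundary condition --- upgrades it to the global $C^{2,\tau}$ bound near $\partial\Omega$. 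Patching a finite cover of $\bar\Omega$ by interior and boundary neighbourhoods then gives $\|u\|_{C^{2,\tau}(\bar\Omega)} \le C$ with $C = C(d,\tau,F,\Omega,\Omega^*)$, which is the claim.
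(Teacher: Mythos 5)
The paper itself offers no proof of this lemma: it is invoked as a black-box citation of the global Schauder estimate of \cite{ChenShibing2018Grft} for the second boundary value problem of the Monge-Amp\`ere equation. You correctly recognize this and your sketch --- Alexandrov solution with doubling Monge-Amp\`ere measure, Caffarelli's interior $C^{2,\tau}$ program via normalized sections and affine invariance, and the boundary-section argument exploiting the $C^{1,1}$ regularity of $\partial\Omega$ and $\partial\Omega^*$ --- is a faithful summary of how the cited result is actually proved, so it is consistent with the paper's treatment.
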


The smoothness of the optimal transport map will be utilized to obtain the neural network approximation error later. Now we prove the optimal risk $L^*$ is zero as a corollary.

The optimal loss is defined as:
    \begin{equation*}
        \begin{aligned}
            L^* =& \inf_{F,G} L(F,G) \\
            =& \lambda\,\CL_{cyc}(\mu,\nu,F,G) + d_{\SD_X}(\mu,F\sh\nu) + d_{\SD_Y}(\nu,G\sh\mu) \\
            =& \lambda\,\BE_{\mu}\Bigl[\Vert{{\bf x}-F(G({\bf x}))}\Vert_{1}\Bigr] + \lambda\,\BE_{\nu}\Bigl[\Vert{{\bf y}-G(F({\bf y}))}\Vert_{1}\Bigr]  \\
            &+d_{\SD_X}(\mu,F\sh\nu) + d_{\SD_Y}(\nu,G\sh\mu)
        \end{aligned}
    \end{equation*}

With Brenier's theorem, there exists convex functions $\psi,\varphi$ such that $\mu=\nabla\psi\sh\nu$ and $\nu=\nabla\varphi\sh\mu$ with the optimal transport cost. Let $G=\nabla\varphi$ and  $F=\nabla\psi$. It is straightforward that $d_{\CH}(\mu,F\sh\nu)=0$ and $d_{\CH}(\nu,G\sh\mu)=0$ for any function class $\CH$.  {The existence of the optimal map implies that $F \circ G$ must be the identity $\mu$-almost everywhere, as it pushes $\mu$ to itself. Similarly, $G \circ F$ is the identity $\nu$-almost everywhere. It follows directly that the cycle consistency loss term is zero.} 

\paragraph{Proof of Lemma \ref{lemma:approx-decomp}} Let $(X,\Vert\cdot\Vert_1,
\mu)$ and $(Y,\Vert\cdot\Vert_1,\nu)$ be two metric measure spaces in $\BR^d$ and let $\dd\mu=f$ and $\dd\nu=g$.

Recall that $D_X$ and $D_Y$ are 1-Lipschitz functions. The translation error is bounded as follows:
    \begin{equation*}
    \begin{aligned}
        d_{\SD_X}(\mu,F\sh\nu) &= \sup_{D_X\in\SD_X}\Bigl\{\BE_{\mu}[D_X({\bf x})]-\BE_{\nu}[D_X(F({\bf y}))]\Bigr\} \\
        &=  \sup_{D_X\in\SD_X} \BE_{\nu}\Bigl[D_X(\nabla\psi({\bf y}))-D_X(F({\bf y}))\Bigr] \\
        &\le \,\BE_{\nu} \Bigl[\Vert{\nabla\psi({\bf y})-F({\bf y})}\Vert_{1}\Bigr] \\
        &\le \sum_{i=1}^{d}\Vert{\nabla\psi_i-F_i}\Vert_{L_\infty(Y)}
    \end{aligned}
    \end{equation*}

We denote the Lipschitz constants of the optimal transport maps $\nabla\psi,\nabla\varphi$ by $B_{\nabla\psi}$ and $B_{\nabla\varphi}$, respectively. The Lipschitz continuous gradient condition is guaranteed by the regularity for the Monge-Amp\`ere equation. The cyclic error is bounded as follows:
    \begin{equation*}
    \begin{aligned}
        &\BE_{\mu}\Bigl[\Vert{{\bf x}-F(G({\bf x}))}\Vert_{1}\Bigr]\\
        &= \BE_{\mu}\Bigl[\Vert{{\bf x}-F(\nabla\varphi({\bf x}))+F(\nabla\varphi({\bf x}))-F(G({\bf x}))}\Vert_{1}\Bigr] \\
        &\le \BE_{\nu}\Bigl[\Vert{\nabla\psi({\bf y})-F({\bf y})}\Vert_{1}\Bigr] + \BE_{\mu}\Bigl[\Vert{F(\nabla\varphi({\bf x}))-F(G({\bf x}))}\Vert_{1}\Bigr] \\
        &\le \BE_{\nu}\Bigl[\Vert{\nabla\psi({\bf y})-F({\bf y})}\Vert_{1}\Bigr] + \BE_{\mu}\Big[\Vert F(\nabla\varphi({\bf x}))-\nabla\psi(\nabla\varphi({\bf x})) \\ 
        &\quad + \nabla\psi(\nabla\varphi({\bf x)}) - \nabla\psi(G({\bf x})) + \nabla\psi(G({\bf x}))  -F(G({\bf x})) \Vert_{1}\Big] \\
        &\le \BE_{\nu}\Bigl[\Vert{\nabla\psi({\bf y})-F({\bf y})}\Vert_{1}\Bigr] + \BE_{\mu}\Big[\Vert F(\nabla\varphi({\bf x}))-\nabla\psi(\nabla\varphi({\bf x}))\Vert_{1}\Big]\\
        &\quad + \BE_{\mu}\Big[\Vert{\nabla\psi(\nabla\varphi({\bf x})) - \nabla\psi(G({\bf x}))}\Vert_{1}\Big] + \BE_{\mu}\Big[\Vert{\nabla\psi(G({\bf x}))  -F(G({\bf x}))}\Vert_{1}\Big]\\
        &\le 3\sum_{i=1}^d\Vert\nabla\psi_i-F_i\Vert_{L_\infty(Y)} + B_{\nabla\psi}\sum_{i=1}^d \Vert\nabla\varphi_i-G_i\Vert_{L_\infty(X)}
    \end{aligned}
    \end{equation*}

Similarly, we can bound $d_{\SD_Y}(\nu,G\sh\mu)$ and $\BE_{\mu}\Bigl[\Vert{{\bf y}-G(F({\bf y}))}\Vert_{1}\Bigr]$. Adding up these upper bounds, we have:
    \begin{equation*}
        L(F,G) \le C \sum_{i=1}^d \Bigl[ \Vert{\nabla\psi_i-F_i}\Vert_{L_\infty(Y)}+\Vert{\nabla\varphi_i-G_i}\Vert_{L_\infty(X)}\Bigr]
    \end{equation*}
    where $C$ depends on the optimal transport maps $B_{\nabla\psi}$, $B_{\nabla\varphi}$.

\paragraph{Remarks on approximation by norm constrained deep neural networks}
We consider the deep ReLU neural networks mapping $\BR^d$ to $\BR$. Following the notations in \cite{siegel2023optimal} and \cite{jiao2023approximation}, we denote the affine map with weight matrix $A$ and bias $b$ by $\CA_{A,b}(x)=Ax+b$. Then the class of deep neural networks with width $\CW$ and depth $\CL$ is given by:
\begin{equation*}
    \NN(\CW,\CL) :=\{\CA_{\CL} \circ \sigma \circ \CA_{{\CL-1}} \circ \sigma \circ \cdots \circ \sigma \circ \CA_{1} \circ \sigma \circ \CA_{0} \}
\end{equation*}
where we denote $\CA_{A_\ell,b_\ell}$ as $\CA_\ell$ for simplicity, and where the weight matrices satisfy ${A}_L \in \BR^{1 \times \CW}, {A}_0 \in \BR^{\CW \times d}$, and ${A}_1, \ldots, {A}_{L-1} \in \BR^{\CW \times \CW}$, and the biases satisfy $b_0, \ldots, b_{L-1} \in \BR^{\CW}$ and $b_L \in \BR$.  {We allow the activation function $\sigma$ to apply either the ReLU function or the identity mapping to each component of its input vector.}

Next, we can define the norm constrained ReLU neural network $\NN(\CW,\CL,B)$ as a subclass of $\NN(\CW,\CL)$:
\begin{equation*}
    \begin{gathered}
        \NN(\CW,\CL,B) :=\{\CA_{\CL} \circ \sigma \circ \CA_{{\CL-1}} \circ \sigma \circ \cdots \circ \sigma \circ \CA_{1} \circ \sigma \circ \CA_{0}: \\ 
        \Vert{(A_{\CL}},{b_{\CL})}\Vert_{\infty}\prod_{\ell=0}^{\CL-1}\max\{\Vert{(A_{\ell}},{b_{\ell})}\Vert_{\infty},1\}\le B\}
    \end{gathered}    
\end{equation*}
where we denote $\CA_{W_\ell,b_\ell}$ as $\CA_\ell$ for simplicity as above. Particularly, we consider the function class of shallow neural networks discussed in \cite{yang2024shallow}:
\begin{equation*}
    \CF(N,M):=\left\{f(x)=\sum_{i=1}^N a_i \sigma\left(\left(x^{\top}, 1\right) v_i\right): \max_{i=1,\dots,n}\{\Vert{v_i}\Vert_{1}\}\sum_{i=1}^N\left|a_i\right| \leq M\right\}
\end{equation*}
where $v_i\in\BR^{d+1}$ and $a_i$ are real numbers.

We derive the inclusion property between the function classes of norm constrained ReLU neural networks in the following lemma.

\begin{lemma}
\label{lemma:inclusion}
For integer $N>0$ and real number $M>0$, assume that $N=\sum_{k=1}^{K}N_k$ and $n=\max\{N_1,\dots,N_K\}$, then we have:
 {
\begin{equation*}
    \mathcal{F}(N, M)\subset\NN(d+n+1,K,M)
\end{equation*}
}
When $n=1$, we have:
 {
\begin{equation*}
    \mathcal{F}(N, M)\subset\NN(d+2,N,M)
\end{equation*}
}
\end{lemma}

\begin{proof}
We will first prove the case of $n=1$. For any $f\in\CF(N,M)$, it can be written as $f(x)=\sum_{i=1}^N a_i \sigma\left(\left(x^{\top}, 1\right) v_i\right)$. For  {$k=1,\dots,N$, let $P_k=\Vert{v_k}\Vert_1$, $Q_k=\max\{P_1,\dots,P_k\}$, and $S_{k}=\sum_{i=1}^{k}|a_i|$. The norm of the shallow neural network is calculated as $Q_NS_N=\max\limits_{i=1,\dots,N}\{\Vert{v_i}\Vert_{1}\}\sum_{i=1}^N\left|a_i\right| \leq M$.}

We consider a deep neural network mapping $x$ to $f(x)$ with the following parameterization:
\begin{equation*}
\begin{array}{ll}
    f_1=\sigma((x^T,1)\frac{v_1}{P_1}), &  h_1=\frac{a_1}{S_1}f_1\\
    f_2=\sigma((x^T,1)\frac{v_2}{P_2}), &  h_2=\frac{Q_1}{Q_2}\frac{S_1}{S_2}h_1 + \frac{P_2}{Q_2}\frac{a_2}{S_2}f_2 \\
    &\vdots \\
    f_{N-1}=\sigma((x^T,1)\frac{v_{N-1}}{P_{N-1}}), &  h_{N-1}=\frac{Q_{N-2}}{Q_{N-1}}\frac{S_{N-2}}{S_{N-1}}h_{N-2} + \frac{P_{N-1}}{Q_{N-1}}\frac{a_{N-1}}{S_{N-1}}f_{N-1} \\
    f_N=\sigma((x^T,1)\frac{v_N}{P_N}), &  h_N=Q_{N-1}S_{N-1}h_{N-1} + P_Na_Nf_{N} {=f(x)}
\end{array}
\end{equation*}
This neural network has $N$ hidden layers. We divide these hidden neurons into three types: $d$ source channels to push forward $x$, one regular channel to compute $f_i$, and one collation channel to compute $h_i$ as the linear combination of $h_{i-1}$ and $f_i$.  {Thus, this neural network has the width $d+2$}. 

Observe that $|a_1/S_1|=1$,  {$\Vert{v_k}\Vert_1/P_k=1$},  {and for  {$k=2,\dots,N-1$},}
 {\begin{equation*}
    \left|\frac{Q_{k-1}S_{k-1}}{Q_kS_k}\right|+\left|\frac{P_ka_k}{Q_kS_k}\right| \le \frac{Q_k(S_{k-1}+a_k)}{Q_kS_k}=\frac{Q_kS_k}{Q_kS_k}=1.
\end{equation*}}
Thus, we have its norm bounded by  {$|Q_{N-1}S_{N-1}|+|P_Na_N|\le Q_NS_N\le M$}.
This means that  {$f\in\NN(d+2,N,M)$} which completes the proof of this simple case.  {The architecture of the neural network is shown in Figure \ref{pictureNN}.}

\begin{figure}[htbp]
    \centering
    \includegraphics[width=.5\textwidth]{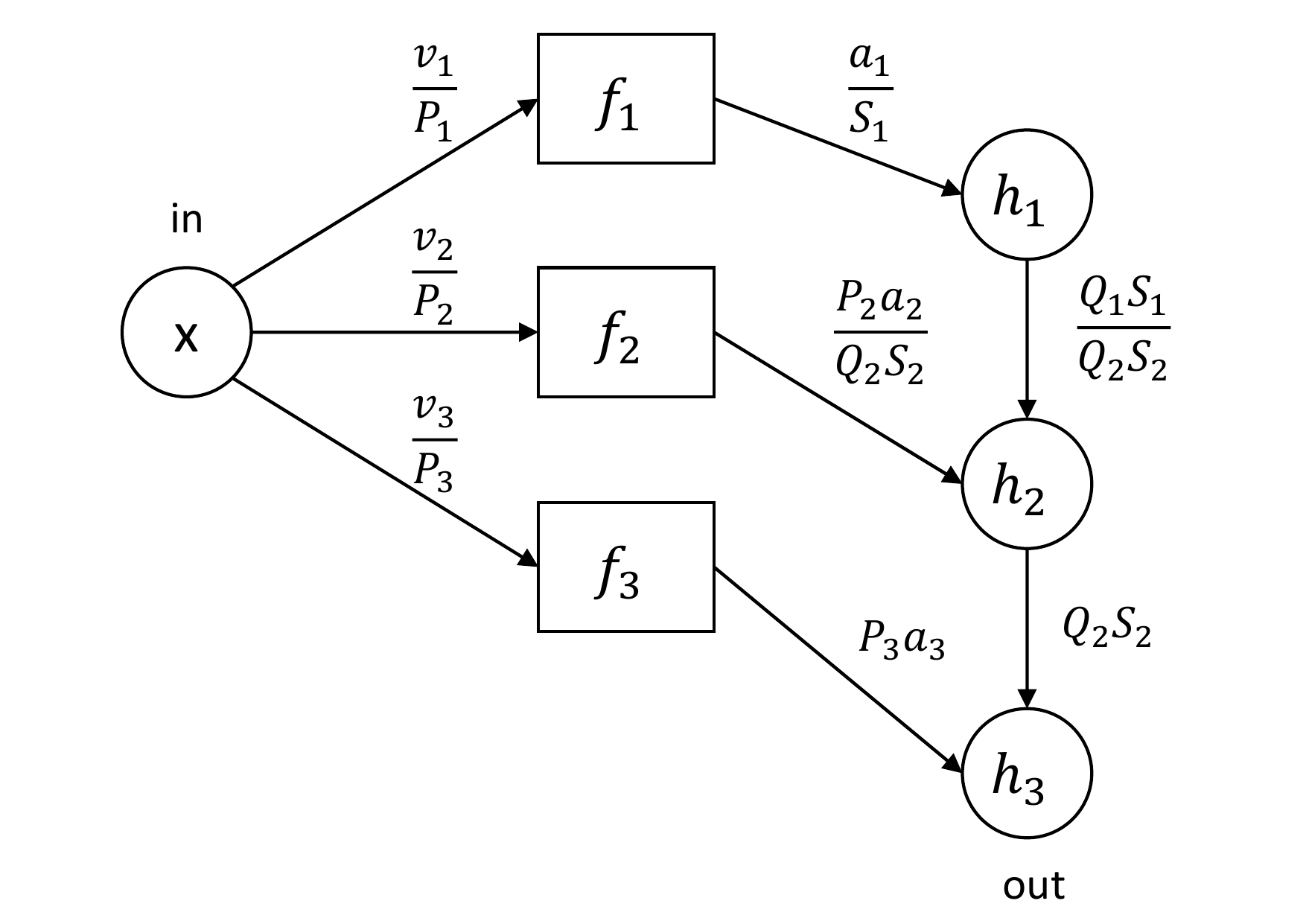}
    \caption{The framework of the defined neural network with 3 hidden layers.}
    \label{pictureNN}
\end{figure}

Next, we consider the case where $1\le n\le N$. Recall that $n=\max\{N_1,\dots,N_K\}$ and $N=\sum_{k=1}^KN_k$. For any $f\in\CF(N,M)$, we divide the sum of $N$ terms into $K$ groups and parameterize them as:
\begin{equation*}
    f(x)=\sum_{k=1}^K\sum_{i=1}^{N_k} a_i^{(k)} \sigma\left(\left(x^{\top}, 1\right) v_i^{(k)}\right)=\sum_{k=1}^K \sigma\left(\left(x^{\top}, 1\right) V_k^T\right)\alpha_k
\end{equation*}
where $V_k=(v_1^{(k)},\dots,v_{N_k}^{(k)})^T\in\BR^{N_k\times(d+1)}$ and $\alpha_k=(a_1^{(k)},\dots,a_{N_k}^{(k)})^T\in\BR^{N_k\times{1}}$.

For $k=1,\dots,K$, let $P_k=\Vert{V_k}\Vert_{\infty}$, $Q_k=\max\{P_1,\dots,P_k\}$, and $S_{k}=\sum_{i=1}^{k}\Vert{\alpha_i}\Vert_{1}$. By definition, we have:
\begin{equation*}
\begin{aligned}
    Q_KS_K &= \max_{k=1,\dots,K}\{\Vert{V_k}\Vert_{\infty}\}\sum_{k=1}^{K}\Vert{\alpha_k}\Vert_{1} \\
    &= \max_{k,i}\{\Vert{v_{i}}^{(k)}\Vert_1\}\sum_{k=1}^K\sum_{i=1}^{N_k}|a_{i}^{(k)}| \\
    &\le M
\end{aligned}
\end{equation*}

We construct the deep neural network mapping $x$ to $f(x)$ in a similar manner as above:
\begin{equation*}
\begin{array}{ll}
    f_1=\sigma((x^T,1)\frac{V_1^T}{P_1})^T, &  h_1=\frac{\alpha_1^T}{S_1}f_1\\
    f_2=\sigma((x^T,1)\frac{V_2^T}{P_2})^T, &  h_2=\frac{Q_1}{Q_2}\frac{S_1}{S_2}h_1 + \frac{P_2}{Q_2}\frac{\alpha_2^T}{S_2}f_2 \\
    &\vdots \\
    f_{K-1}=\sigma((x^T,1)\frac{V_{K-1}^T}{P_{K-1}})^T, &  h_{K-1}=\frac{Q_{K-2}}{Q_{K-1}}\frac{S_{K-2}}{S_{K-1}}h_{K-2} + \frac{P_{K-1}}{Q_{K-1}}\frac{\alpha_{K-1}^T}{S_{K-1}}f_{K-1} \\
    f_K=\sigma((x^T,1)\frac{V_K^T}{P_K})^T, &  h_K=Q_{K-1}S_{K-1}h_{K-1} + P_K\alpha^T_Kf_{K} {=f(x)}
\end{array}
\end{equation*}
Following a similar analysis as above, we can show this neural network is with depth $K$ and width  {$d+n+1$}, where we use $n$ regular channels to compute $f_k$. Its norm is bounded by $|Q_{K-1}S_{K-1}|+|P_K|\Vert\alpha_K\Vert_1\le Q_KS_K \le M$. This means that  {$f\in\NN(d+n+1,K,M)$}. Thus, we obtain the inclusion as desired.
\end{proof}

It was shown by Yang et al. \cite{yang2024shallow} that, if $\alpha<(d+3)/2$ and $d>3$, then:
\begin{equation*}
\sup _{h \in \mathcal{H}^\alpha} \inf _{f \in \CF(N,M)}\|h-f\|_{L^{\infty}\left(\Omega\right)} \lesssim N^{-\frac{\alpha}{d}} \vee M^{-\frac{2 \alpha}{d+3-2 \alpha}}
\end{equation*}
Next we apply the case of $n=1$ in Lemma \ref{lemma:inclusion} to obtain the approximation rate by the deep neural network class $\NN(\CW,\CL,B)$. Let  {$\CW\ge d+2$}, $\CL=N$ and $B=M$, if $\alpha<(d+3)/2$ and $d>3$, we can construct ReLU neural networks $f\in\CF(N,M)\subset\NN(\CW,\CL,B)$ such that:
\begin{equation*}
\sup _{h \in \mathcal{H}^\alpha} \|h-f\|_{L^{\infty}\left(\Omega\right)} \lesssim \CL^{-\frac{\alpha}{d}} \vee B^{-\frac{2 \alpha}{d+3-2 \alpha}}
\end{equation*}

\paragraph{Proof of Theorem \ref{thm:approx-error}}
Lemma \ref{lemma:C-regularity} guarantees that the optimal transport mappings $\nabla \psi_i, \nabla \varphi_i \in \CH^\alpha$, where $1<\alpha<2$, $i=1,\dots,d$. Then we have $\alpha<(d+3)/2$. As discussed above, for each $\nabla \psi_i, \nabla \varphi_i$, we can construct ReLU neural networks $F_i, G_i\in\NN(\CW,\CL,B)$ with  {$\CW \ge d+2$} such that:
\begin{equation*}
\begin{aligned}
& \left\|\nabla \psi_i-F_i\right\|_{L^{\infty}(Y)} \lesssim \mathcal{L}^{-\alpha / d} \vee B^{-\frac{2 \alpha}{d+3-2 \alpha}} \\
& \left\|\nabla \varphi_i-G_i\right\|_{L^{\infty}(X)} \lesssim \mathcal{L}^{-\alpha / d} \vee B^{-\frac{2 \alpha}{d+3-2 \alpha}}
\end{aligned}
\end{equation*}

We stack the networks $F_i$ and $G_i$ using parallelization to construct the neural networks $F$ and $G$, respectively. Then, the optimal transport map $\nabla \psi$ and $\nabla \varphi$ between $\nu$ and $\mu$ can be approximated by $\operatorname{ReLU}$ neural networks $F$ and $G$ with  {width $d^2+2d$} and depth $\mathcal{L}$. The rate $\mathcal{O}\left(\CL^{-\alpha / d}\right)$ holds when $B \gtrsim \CL^{(d+3-2 \alpha) /(2 d)}$.

\subsection{Proof for estimation error analysis}
\label{section:1}
\paragraph{Proof of Proposition \ref{prop:dec}} 
The estimation error could be further decomposed as follows:
\begin{equation*}
\label{equ:L3}
\begin{aligned}
        L(\hat{F},\hat{G})-L(\tilde{F},\tilde{G}) = & ~~~ \underbrace{L(\hat{F},\hat{G}) - \hat{L}(\hat{F},\hat{G})}_{\mathrm{I}} \\
        & \ + \underbrace{\hat{L}(\hat{F},\hat{G}) - \hat{L}(\tilde{F},\tilde{G})}_{\mathrm{II}} \\
        & \ + \underbrace{\hat{L}(\tilde{F},\tilde{G}) - L(\tilde{F},\tilde{G})}_{\mathrm{III}}
\end{aligned}
\end{equation*}
Since the empirical risk $\hat{L}(F,G)$ is minimized at $(\hat{F},\hat{G})$, we have part $(\mathrm{II})\le 0$.

We then focus on the approximation of $\bigl[L(\hat{F},\hat{G}) - \hat{L}(\hat{F},\hat{G})\bigr]$ and $\bigl[\hat{L}(\tilde{F},\tilde{G}) - L(\tilde{F},\tilde{G})\bigr]$ and further decompose them based on the definition in Eq.(\ref{equ:L1}) and Eq.(\ref{equ:L2}).
\begin{equation*}
\begin{aligned}
        & L(\hat{F},\hat{G})-L(\tilde{F},\tilde{G}) \\
        \leq & ~~~ \underbrace{L(\hat{F},\hat{G}) - \hat{L}(\hat{F},\hat{G})}_{\mathrm{I}} + \underbrace{\hat{L}(\tilde{F},\tilde{G}) - L(\tilde{F},\tilde{G})}_{\mathrm{III}}\\
        =&\lambda \Bigl[\CL_{cyc}(\mu,\nu,\hat{F},\hat{G}) - \CL_{cyc}(\hat\mu,\hat\nu,\hat{F},\hat{G}) +\CL_{cyc}(\hat\mu,\hat\nu,\tilde{F},\tilde{G}) - \CL_{cyc}(\mu,\nu,\tilde{F},\tilde{G}) \Bigr] \\
        &+ \Bigl[d_{\SD_X}(\mu,\hat{F}_\#\nu)-d_{\SD_X}(\hat\mu,\hat{F}_\#\hat\nu) + d_{\SD_X}(\hat\mu,\tilde{F}_\#\hat\nu)-d_{\SD_X}(\mu,\tilde{F}_\#\nu)\Bigr]\\
        &+\Bigl[d_{\SD_Y}(\nu,\hat{G}_\#\mu)-d_{\SD_Y}(\hat\nu,\hat{G}_\#\hat\mu) + d_{\SD_Y}(\hat\nu,\tilde{G}_\#\hat\mu)-d_{\SD_Y}(\nu,\tilde{G}_\#\mu)\Bigr]
\end{aligned}
\end{equation*}

For any prediction $(F,G)$, the statistical error $L(F,G) - \hat{L}(F,G)$ could be decomposed as follows.

\begin{equation*}
\begin{aligned}
    L(F,G) - \hat{L}(F,G) = & ~ \Bigl[\lambda\CL_{cyc}(\mu,\nu,F,G) + d_{\SD_Y}(\nu,G_\#\mu) + d_{\SD_X}(\mu,F_\#\nu)\Bigr] \\
    & - \Bigl[ \lambda\CL_{cyc}(\hat\mu,\hat\nu,F,G) - d_{\SD_Y}(\hat\nu,G_\#\hat\mu) - d_{\SD_X}(\hat\mu,F_\#\hat\nu) \Bigr] \\
    = & \,\lambda \Bigl[\CL_{cyc}(\mu,\nu,F,G) - \CL_{cyc}(\hat\mu,\hat\nu,F,G) \Bigr] \\
    & + \Bigl[  d_{\SD_Y}(\nu,G_\#\mu)-d_{\SD_Y}(\hat\nu,G_\#\hat\mu) \Bigr] \\
    & + \Bigl[ d_{\SD_X}(\mu,F_\#\nu)- d_{\SD_X}(\hat\mu,F_\#\hat\nu) \Bigr] \\
\end{aligned}
\end{equation*}
\paragraph{Proof of Theorem \ref{thm:est2}}  In general, we define the statistical error $\mathbb{E}[d_\mathcal{H}(\mu,\hat\mu)]$ which describes the distance of empirical distribution $\hat\mu$ and the true data distribution $\mu$ with function class $\mathcal{H}$. The decomposition of the estimation error shows that we should focus on the statistical error to analyze the estimation error. The bounding of the statistical error follows the standard strategy. We first describe the upper bound of the statistical error by Rademacher complexity. Then, we bound the Rademacher complexity utilizing the covering number of $\mathcal{H}$. Two main tools, Rademacher complexity and covering number, are involved in our study of estimation error, and we here give the definitions of them.
\begin{myDef}[Rademacher Complexity \cite{bartlett-no-date}]
    Let $\SD:=\{l(\mathbf{x})\}$ be a function class. Then, the Rademacher complexity $\mathcal{R}\left(\SD\right)$ is defined as
     $$\mathcal{R}\left(\SD\right)=\mathbb{E}_{\mathbf{x}, \epsilon} \sup _{l \in \SD}\left|\frac{1}{n} \sum_{i=1}^{n} \epsilon_i l\left( \mathbf{x}_i\right)\right|,$$

    where $\epsilon_1, \epsilon_2 \ldots, \epsilon_n$ are independent random variables uniformly chosen from $\{-1,1\}$. Similarly, for compositional function class $\mathcal{H}_{\SD \times \CF}:=\{l( f(\mathbf{x})):l \in \SD, f \in \CF \}$, the Rademacher complexity $\mathcal{R}\left(\mathcal{H}_{\SD \times \CF}\right)$ is defined as,

    $$\mathcal{R}\left(\mathcal{H}_{\SD \times \CF}\right)=\mathbb{E}_{\mathbf{x}, \epsilon} \sup _{l \in \SD, f \in \CF }\left|\frac{1}{n} \sum_{i=1}^{n} \epsilon_i l\left(f\left(\mathbf{x}_i\right)\right)\right| $$.

\end{myDef}
\begin{myDef}[Covering number]
    Let $(S, \rho)$ be a metric space, and let $T \subset S$. We say that $T^{\prime} \subset S$ is an $\alpha$-cover for $T$ if, for all $x \in T$, there exists $y \in T^{\prime}$ such that $\rho(x, y) \leq \alpha$. The $\alpha$-covering number of $(T, \rho)$, denoted $\mathcal{N}(\alpha, T, \rho)$ is the size of the smallest $\alpha$-covering.
\end{myDef}
We express the upper bound of the statistical error $\mathbb{E}[d_\mathcal{H}(\mu,\hat\mu)]$ with Lemma \ref{lemma:statistical error} \cite{huang2022error}, which follows the strategy as bounding the statistical error of function $\mathcal{H}$ by Rademacher complexity and bound the Rademacher complexity via covering number by Dudley’s entropy integral \cite{inbook,Dudley_2002}.
\begin{lemma}[Statistical error bounding \cite{huang2022error}]
\label{lemma:statistical error}

    Suppose $\sup _{h \in \mathcal{H}}\|h\|_{\infty} \leq B$, then we can bound $\mathbb{E}[d_\mathcal{H} (\mu,\hat\mu)]$ as,
    \begin{equation*}
    \begin{aligned}
\mathbb{E}_{\hat{\mathbf{x}}}\left[d_{\mathcal{H}}\left(\mu, \widehat{\mu}\right)\right] \leq 2 \mathbb{E}_{\hat{\mathbf{x}}} \inf _{0<\delta<B / 2}\left(4 \delta+\frac{12}{\sqrt{n}} \int_\delta^{B / 2} \sqrt{\log \mathcal{N}\left(\varepsilon, \mathcal{H}_{\left.\right|_{\hat{\mathbf{x}}}}, \|\cdot\|_{\infty}\right)} d \epsilon\right),
\end{aligned}
\end{equation*}
where we denote $\mathcal{H}_{\left.\right|_{\hat{\mathbf{x}}}}=\left\{\left(h\left(\mathbf{x}_1\right), \ldots, h\left(\mathbf{x}_n\right)\right): h \in \mathcal{H}\right\}$ for any i.i.d. samples $ \mathbf{\hat x}=\left\{\mathbf{x}_i\right\}_{i=1}^n$ from $\mu$ and $\mathcal{N}\left(\epsilon, \mathcal{H}_{\left.\right|_{\hat{\mathbf{x}}}},\|\cdot\|_{\infty}\right)$ is the $\epsilon$-covering number of $\mathcal{H}_{\left.\right|_{\hat{\mathbf{x}}}} \subseteq \mathbb{R}^d$ with respect to the $\|\cdot\|_{\infty}$ distance.
\end{lemma}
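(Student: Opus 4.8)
The plan is to follow the standard two-step route of empirical process theory (the one used in \cite{huang2022error}): first reduce $\mathbb{E}_{\hat{\mathbf{x}}}[d_{\mathcal{H}}(\mu,\hat\mu)]$ to a Rademacher complexity by symmetrization, then control that Rademacher complexity by Dudley's entropy integral through a chaining argument, and finally convert the intrinsic metric of the Rademacher process into the $\|\cdot\|_\infty$-covering number appearing in the statement.

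First I would symmetrize. Writing $d_{\mathcal{H}}(\mu,\hat\mu)=\sup_{h\in\mathcal{H}}(\mathbb{E}_{\mathbf{x}\sim\mu}[h(\mathbf{x})]-\tfrac1n\sum_{i=1}^n h(\mathbf{x}_i))$, I introduce a ghost sample $\hat{\mathbf{x}}'=\{\mathbf{x}_i'\}_{i=1}^n$ i.i.d.\ from $\mu$ and independent of $\hat{\mathbf{x}}$, and replace $\mathbb{E}_{\mathbf{x}\sim\mu}[h(\mathbf{x})]$ by $\mathbb{E}_{\hat{\mathbf{x}}'}[\tfrac1n\sum_i h(\mathbf{x}_i')]$. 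Pulling this inner expectation outside the supremum by Jensen's inequality gives $\mathbb{E}_{\hat{\mathbf{x}}}[d_{\mathcal{H}}(\mu,\hat\mu)]\le\mathbb{E}_{\hat{\mathbf{x}},\hat{\mathbf{x}}'}\sup_{h}\tfrac1n\sum_i(h(\mathbf{x}_i')-h(\mathbf{x}_i))$. Since each pair $(\mathbf{x}_i,\mathbf{x}_i')$ is exchangeable, inserting i.i.d.\ Rademacher signs $\epsilon_i\in\{-1,1\}$ does not change the law of the sum, and splitting the supremum of the difference yields $\mathbb{E}_{\hat{\mathbf{x}}}[d_{\mathcal{H}}(\mu,\hat\mu)]\le 2\,\mathbb{E}_{\hat{\mathbf{x}}}\mathbb{E}_{\epsilon}\sup_{h\in\mathcal{H}}|\tfrac1n\sum_i\epsilon_i h(\mathbf{x}_i)|$, i.e.\ twice the expected empirical Rademacher complexity of the point set $\mathcal{H}_{|\hat{\mathbf{x}}}\subset\mathbb{R}^n$.

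Next I would apply Dudley's chaining bound conditionally on $\hat{\mathbf{x}}$. Setting $A:=\mathcal{H}_{|\hat{\mathbf{x}}}$, which has $\|\cdot\|_\infty$-diameter at most $2B$ by hypothesis, the process $Z_v=\tfrac1n\sum_i\epsilon_i v_i$ (for $v\in A$) has sub-Gaussian increments for the scaled Euclidean metric $d(v,v')=\tfrac1{\sqrt n}\|v-v'\|_2$, and the truncated chaining bound gives, for every $0<\delta<B/2$, $\mathbb{E}_\epsilon\sup_{v\in A}|Z_v|\le 4\delta+12\int_\delta^{B/2}\sqrt{\log\mathcal{N}(\varepsilon,A,d)}\,d\varepsilon$ (the constants $4$, $12$ and the upper limit $B/2$ being those of the particular truncated Dudley bound used in \cite{huang2022error}). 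Since $\|v\|_2\le\sqrt n\,\|v\|_\infty$, an $\varepsilon$-cover of $A$ in $\|\cdot\|_\infty$ is also an $\varepsilon$-cover in $d$, so $\mathcal{N}(\varepsilon,A,d)\le\mathcal{N}(\varepsilon,\mathcal{H}_{|\hat{\mathbf{x}}},\|\cdot\|_\infty)$, and rescaling the integration variable produces the $\tfrac{12}{\sqrt n}$ prefactor. Combining this with the symmetrization step, taking $\inf_\delta$ and then $\mathbb{E}_{\hat{\mathbf{x}}}$ recovers the claimed inequality.

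The hard part is purely bookkeeping rather than conceptual: making the chaining step fully rigorous with the stated constants and truncation level, and cleanly passing from the intrinsic $\ell_2/\sqrt n$ geometry of the Rademacher process to the $\|\cdot\|_\infty$-covering number, since it is the latter into which the explicit neural-network covering estimates are plugged later in the estimation-error analysis. One also has to observe that the one-sided IPM $d_{\mathcal{H}}$ (a supremum without absolute value) is dominated by the symmetrized quantity — this is automatic, because symmetrization itself introduces the absolute value. All of this is exactly what is carried out in \cite{huang2022error}, whose argument I would follow.
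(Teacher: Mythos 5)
The paper does not prove this lemma at all: it is imported verbatim from Huang et al.\ \cite{huang2022error}, accompanied only by the one-line remark that the argument is symmetrization followed by Dudley's entropy integral, and your sketch reproduces exactly that strategy, so there is no methodological divergence to discuss. The one bookkeeping slip worth flagging is in the metric you attach to the Rademacher process: for $Z_v=\frac1n\sum_i\epsilon_iv_i$ the sub-Gaussian increment scale is $\frac1n\|v-v'\|_2$, not $\frac1{\sqrt n}\|v-v'\|_2$, and with the metric you wrote the inequality $\mathcal N(\varepsilon,A,d)\le\mathcal N(\varepsilon,A,\|\cdot\|_\infty)$ leaves nothing to ``rescale'' and no $\frac1{\sqrt n}$ ever appears. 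The correct accounting is that $\frac1n\|v-v'\|_2\le\frac1{\sqrt n}\|v-v'\|_\infty$ gives $\mathcal N(\varepsilon,A,d)\le\mathcal N(\sqrt n\,\varepsilon,A,\|\cdot\|_\infty)$, and the substitution $u=\sqrt n\,\varepsilon$ in the entropy integral is what produces the $\frac{12}{\sqrt n}$ prefactor (and simultaneously rescales the $d$-diameter, of order $B/\sqrt n$, back to order $B$ in the upper limit). This is exactly the kind of detail you rightly describe as bookkeeping rather than conceptual, and the inequality you arrive at agrees with the stated lemma.
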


Next, we show the upper bound of the estimation error for CycleGAN. Following the decomposition of the estimation error (Prop.\ref{prop:dec}), we analyze the upper bound of the estimation error of CycleGAN in the generalization and cycle-consistency types, respectively. For the generalization error, we develop the upper bound of the backward generation process with a similar strategy in Lemma \ref{lemma:statistical error} and can also achieve the error of the forward process referring to the symmetric design of the CycleGAN structure.
\begin{lemma}[Estimation Error in Generalization Type]
\label{lemma:Generalization}
    Let $\mu, \nu$ be the target distribution over the compact domain $X$, $Y$ on $[0,1]^d$, given $n$ i.i.d training samples as $\left\{\mathbf{x}_i\right\}_{i=1}^{n}$ from $\mu$ and $m$ i.i.d training samples $\left\{\mathbf{y}_i\right\}_{i=1}^{m}$ from $\nu$. Let $\SD_X = \mathcal{N}\mathcal{N}(\mathcal{W}_{D_X},\mathcal{L},1)$ be the function class of discriminator $D_X$ and ${\CF=\mathcal{N}\mathcal{N}(\mathcal{W}_F,\mathcal{L}},B_{F})$ be the function class of generator $F$ as defined in Section \ref{sec:form}. We denote that $\sup _{l_x \in \SD_X}\|l_x\|_{\infty} \leq 1$ and $\sup _{l_x \in \SD_X, f \in \CF}\|l_x\circ f\|_{\infty} \leq B_F$. Then, we can get the upper bound with a probability of $1-4 \delta$ (where $\delta=\min\{\delta_1, \delta_2\}$),
    \begin{equation*}
    \begin{aligned}
        \CL_{\SD_X, \CF}(\mu,\nu) &\leq  16 \mathbb{E}_{\mathbf{\hat y}} \inf _{0<{\xi_{1}}< {B_F} / 2}\left( {\xi_{1}}+\frac{3}{\sqrt{m}} \int_{\xi_1}^{ {B_F} / 2} \sqrt{\log \mathcal{N}\left(\epsilon, {{\SD_{X}} \circ \CF}_{\left.\right|_{\mathbf{\hat y}}}, \|\cdot\|_{\infty}\right)} d \epsilon\right) \\
        &\quad + 32 \mathbb{E}_{\mathbf{\hat x}} \inf _{0<{\xi_{2}}< 1 / 2}\left({\xi_{2}}+\frac{3}{\sqrt{n}} \int_{\xi_2}^{1 / 2} \sqrt{\log \mathcal{N}\left(\epsilon, {\SD_X}_{\left.\right|_{\mathbf{\hat x}}}, \|\cdot\|_{\infty}\right)} d \epsilon\right)\\
  &\quad +2 B_F \sqrt{\frac{2\log \frac{1}{\delta_1}}{ m}}+ 2 \sqrt{\frac{2\log \frac{1}{\delta_2}}{ n}})
    \end{aligned}
    \end{equation*}

    $$    \begin{aligned}
		\text{where} \quad &\CL_{\SD_X,\CF}(\mu,\nu) \coloneqq d_{\SD_X}(\mu,\hat{F}_\#\nu)-d_{\SD_X}(\hat\mu,\hat{F}_\#\hat\nu) + d_{\SD_X}(\hat\mu,\tilde{F}_\#\hat\nu)-d_{\SD_X}(\mu,\tilde{F}_\#\nu).
	\end{aligned}$$

\end{lemma}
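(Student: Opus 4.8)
The plan is to bound $\CL_{\SD_X,\CF}(\mu,\nu)$ by two ``ordinary'' statistical errors---one comparing $\mu$ with its empirical version $\hat\mu$ through the discriminator class $\SD_X$, the other comparing $\nu$ with $\hat\nu$ through the composed class $\SD_X\circ\CF$---and then to control each of them via Lemma~\ref{lemma:statistical error} together with a bounded-differences concentration step.

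\textbf{Reduction to two statistical errors.} Since $\SD_X=\NN(\CW_{D_X},\CL,1)$ is closed under negating the output layer, $d_{\SD_X}$ is symmetric and satisfies the triangle inequality on probability measures. I write $\CL_{\SD_X,\CF}(\mu,\nu)=A+B'$ with $A:=d_{\SD_X}(\mu,\hat F_\#\nu)-d_{\SD_X}(\hat\mu,\hat F_\#\hat\nu)$ and $B':=d_{\SD_X}(\hat\mu,\tilde F_\#\hat\nu)-d_{\SD_X}(\mu,\tilde F_\#\nu)$, insert the intermediate measure $\hat F_\#\nu$ into $A$ and $\tilde F_\#\hat\nu$ into $B'$, and apply both $|d_{\SD_X}(\mu,\rho)-d_{\SD_X}(\hat\mu,\rho)|\le d_{\SD_X}(\mu,\hat\mu)$ and $d_{\SD_X}(F_\#\nu,F_\#\hat\nu)\le\sup_{D_X\in\SD_X}|\BE_\nu[D_X\circ F]-\BE_{\hat\nu}[D_X\circ F]|\le d_{\SD_X\circ\CF}(\nu,\hat\nu)$, the last step valid because $\hat F,\tilde F\in\CF$ (this is what decouples the sample dependence of $\hat F$). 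This gives
$$\CL_{\SD_X,\CF}(\mu,\nu)\;\le\;2\,d_{\SD_X}(\mu,\hat\mu)\,+\,2\,d_{\SD_X\circ\CF}(\nu,\hat\nu),$$
where $d_{\SD_X\circ\CF}(\nu,\hat\nu):=\sup_{D_X\in\SD_X,\,F\in\CF}|\BE_\nu[D_X\circ F]-\BE_{\hat\nu}[D_X\circ F]|$.

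\textbf{Concentration and the entropy integral.} For any class $\mathcal{H}$ with $\sup_{h\in\mathcal{H}}\|h\|_\infty\le B$, swapping one of $k$ samples changes $d_{\mathcal{H}}(\mu,\hat\mu)$ by at most $2B/k$, so McDiarmid's inequality gives $d_{\mathcal{H}}(\mu,\hat\mu)\le\BE[d_{\mathcal{H}}(\mu,\hat\mu)]+B\sqrt{2\log(1/\delta)/k}$ with probability at least $1-\delta$. I apply this with $(\mathcal{H},B,k)=(\SD_X,1,n)$ on the $\hat\mu$ side (legitimate since $\sup_{l_x\in\SD_X}\|l_x\|_\infty\le1$) and with $(\mathcal{H},B,k)=(\SD_X\circ\CF,B_F,m)$ on the $\hat\nu$ side (since $\sup_{l_x\in\SD_X,\,f\in\CF}\|l_x\circ f\|_\infty\le B_F$); then Lemma~\ref{lemma:statistical error} bounds $\BE[d_{\SD_X}(\mu,\hat\mu)]$ and $\BE[d_{\SD_X\circ\CF}(\nu,\hat\nu)]$ by the Dudley entropy integrals in the covering numbers $\mathcal{N}(\epsilon,{\SD_X}|_{\mathbf{\hat x}},\|\cdot\|_\infty)$ (with upper limit $\tfrac12$) and $\mathcal{N}(\epsilon,{(\SD_X\circ\CF)}|_{\mathbf{\hat y}},\|\cdot\|_\infty)$ (with upper limit $\tfrac{B_F}2$)---precisely the integrals appearing in the statement. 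Substituting these back into the displayed inequality and taking a union bound over the four concentration events (one per sample set, arising from each of $A$ and $B'$), so that the bound holds with probability $1-4\delta$ for $\delta=\min\{\delta_1,\delta_2\}$, collecting the numerical factors from the decomposition then yields the claimed inequality.

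\textbf{Main obstacle.} The delicate step is the decoupling in the reduction: the empirical minimiser $\hat F$ depends on \emph{both} $\hat\mu$ and $\hat\nu$, so $\BE_{\hat\nu}[D_X\circ\hat F]$ is not an average of i.i.d.\ terms and cannot be concentrated directly. Passing to $\sup_{F\in\CF}$ resolves this---legitimately, since $\hat F\in\CF$---but that is exactly what pulls the larger composed class $\SD_X\circ\CF$ (and hence the factor $B_F$ and its covering number) into the bound, and it is why the structural assumption $\sup_{D_X\in\SD_X,\,F\in\CF}\|D_X\circ F\|_\infty\le B_F$ is required. A secondary bookkeeping point is that, although $\tilde F$ is fixed, the term it produces is still absorbed into $d_{\SD_X\circ\CF}(\nu,\hat\nu)$ via the crude estimate $d_{\SD_X\circ\tilde F}\le d_{\SD_X\circ\CF}$, which explains the factor $2$ in front of each statistical error.
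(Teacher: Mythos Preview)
Your proposal is correct and follows essentially the same strategy as the paper: split each IPM difference into a $\mu$-vs-$\hat\mu$ piece and a $\nu$-vs-$\hat\nu$ piece, decouple the sample-dependent $\hat F$ by passing to $\sup_{F\in\CF}$, concentrate each piece via McDiarmid, and bound the expectations with Lemma~\ref{lemma:statistical error}. Your packaging via the IPM triangle inequality (using that $\SD_X$ is closed under output negation) is a bit cleaner than the paper's explicit insertion of intermediate terms and in fact yields a slightly smaller constant on the $\hat\mu$-side entropy integral, which of course still implies the stated bound.
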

\begin{proof}
\label{proof:Generalization}
    \begin{equation*}
        \CL_{\SD_X,\CF}(\mu,\nu) \coloneqq d_{\SD_X}(\mu,\hat{F}_\#\nu)-d_{\SD_X}(\hat\mu,\hat{F}_\#\hat\nu) + d_{\SD_X}(\hat\mu,\tilde{F}_\#\hat\nu)-d_{\SD_X}(\mu,\tilde{F}_\#\nu).
    \end{equation*}
    For $d_{\SD_X}(\mu,\hat{F}_\#\nu)-d_{\SD_X}(\hat\mu,\hat{F}_\#\hat\nu)$ , we can write it as,
    \begin{equation*}
        \begin{aligned}
		d_{\SD_X}(\mu,\hat{F}_\#\nu)-d_{\SD_X}(\hat\mu,\hat{F}_\#\hat\nu)= ~&d_{\SD_X}(\mu,\hat{F}_\#\nu)-d_{\SD_X}(\mu,\hat{F}_\#\hat\nu)+d_{\SD_X}(\mu,\hat{F}_\#\hat\nu)-d_{\SD_X}(\hat\mu,\hat{F}_\#\hat\nu)\\
		\leq~& \sup _{\text{$l_x \in \SD_X$}}{\lvert \frac{1}{m} \sum_{i=1}^{m} l_x\left(\hat{f}(\mathbf{y}_i)\right)-\mathbb{E}_{\hat{f}_{\#} \nu} [l_x (\hat{f}(\mathbf{y}))]\rvert}\\
		&+\sup _{\text{$l_x \in \SD_X$}}\lvert \mathbb{E}_{\mu}[ l_x(\mathbf{x})]-\frac{1}{n} \sum_{i=1}^{n} l_x\left(\mathbf{x}_i\right)\rvert\\
        \leq~& \sup _{\text{$l_x \in \SD_X$,$f \in \CF$}}{\lvert \frac{1}{m} \sum_{i=1}^{m} l_x\left(f(\mathbf{y}_i)\right)-\mathbb{E}_{f_{\#} \nu} [l_x (f(\mathbf{y}))]\rvert}\\
		&+\sup _{\text{$l_x \in \SD_X$}}\lvert \mathbb{E}_{\mu}[ l_x(\mathbf{x})]-\frac{1}{n} \sum_{i=1}^{n} l_x\left(\mathbf{x}_i\right)\rvert
	\end{aligned}
    \end{equation*}
    For $d_{\SD_X}(\hat\mu,\tilde{F}_\#\hat\nu)-d_{\SD_X}(\mu,\tilde{F}_\#\nu)$, we can write it as,
    \begin{equation*}
        \begin{aligned}
		d_{\SD_X}(\hat\mu,\tilde{F}_\#\hat\nu)-d_{\SD_X}(\mu,\tilde{F}_\#\nu)= &d_{\SD_X}(\hat\mu,\tilde{F}_\#\hat\nu)-d_{\SD_X}(\mu,\tilde{F}_\#\hat\nu)+d_{\SD_X}(\mu,\tilde{F}_\#\hat\nu)-d_{\SD_X}(\mu,\tilde{F}_\#\nu)\\
		\leq& \sup _{\text{$l_x \in \SD_X$}}{\lvert\frac{1}{m} \sum_{i=1}^{m} l_x\left(\tilde{f}(\mathbf{y}_i)\right)-\mathbb{E}_{\tilde{f}_{\#} \nu} [l_x (\tilde{f}(\mathbf{y}))]\rvert}\\
		&+\sup _{\text{$l_x \in \SD_X$}}\lvert\mathbb{E}_{\mu}[ l_x(\mathbf{x})]-\frac{1}{n} \sum_{i=1}^{n} l_x\left(\mathbf{x}_i\right)\rvert
	\end{aligned}
    \end{equation*}
    Thus, we can get
    \begin{equation*}
        \begin{aligned}
		\CL_{\SD_X,\CF}(\mu,\nu)\leq &\sup _{\text{$l_x \in \SD_X$,$f \in \CF$}}{\lvert \frac{1}{m} \sum_{i=1}^{m} l_x\left(f(\mathbf{y}_i)\right)-\mathbb{E}_{f_{\#} \nu} [l_x (f(\mathbf{y}))]\rvert}\\
		&+\sup _{\text{$l_x \in \SD_X$}}{\lvert\frac{1}{m} \sum_{i=1}^{m} l_x\left(\tilde{f}(\mathbf{y}_i)\right)-\mathbb{E}_{\tilde{f}_{\#} \nu} [l_x (\tilde{f}(\mathbf{y}))]\rvert}\\
		&+2\sup _{\text{$l_x \in \SD_X$}}\lvert\mathbb{E}_{\mu}[ l_x(\mathbf{x})]-\frac{1}{n} \sum_{i=1}^{n} l_x\left(\mathbf{x}_i\right)\rvert.
	\end{aligned}
    \end{equation*}
    As $\SD_X $ and ${\CF}$ are bounded, we can apply McDiarmid's inequality \cite{doob-1940} to further bound $\CL_{\SD_X,\CF}(\mu,\nu)$, and get that with a probability of $1-4 \delta$ (where $\delta=\min\{\delta_1, \delta_2\}$)
    \begin{equation}
    \label{eqn:sample1}
        \begin{aligned}
		\CL_{\SD_X,\CF}(\mu,\nu)
           \leq& 2 B_F \sqrt{\frac{2\log \frac{1}{\delta_1}}{ m}}+ 2  \sqrt{\frac{2\log \frac{1}{\delta_2}}{ n}}\\
           &+ 2\underbrace{\mathbb{E}_{\mathbf{\hat y}}\sup _{\text{$l_x \in \SD_X$,$f \in \CF$}}{\lvert \frac{1}{m} \sum_{i=1}^{m} l_x\left(f(\mathbf{y}_i)\right)-\mathbb{E}_{f_{\#} \nu} [l_x (f(\mathbf{y}))]\rvert}}_{(\mathrm{I})}\\
           &+4\underbrace{\mathbb{E}_{\mathbf{\hat x}}\sup _{\text{$l_x \in \SD_X$}}\lvert\mathbb{E}_{\mu}[ l_x(\mathbf{x})]-\frac{1}{n} \sum_{i=1}^{n} l_x\left(\mathbf{x}_i\right)\rvert}_{(\mathrm{II})} .
	\end{aligned}
    \end{equation}
    We next estimate the upper bound of part $(\mathrm{I})$ and $(\mathrm{II})$ of Eq.(\ref{eqn:sample1}). As defined in (\ref{sec:form}), we have $\sup _{l_x \in \SD_X}\|l_x\|_{\infty} \leq 1$ and $\sup _{l_x \in \SD_X, f \in \CF}\|l_x\circ f\|_{\infty} \leq B_F$ Considering the result from Lemma \ref{lemma:statistical error}, we can easily get that for $(\mathrm{I})$ and $(\mathrm{II})$,
    \begin{equation*}
    \begin{aligned}
(\mathrm{I}) :\mathbb{E}_{\mathbf{\hat y}}&\sup _{\text{$l_x \in \SD_X$,$f \in \CF$}}{\lvert \frac{1}{m} \sum_{i=1}^{m} l_x\left(f(\mathbf{y}_i)\right)-\mathbb{E}_{f_{\#} \nu} [l_x (f(\mathbf{y}))]\rvert}\\
&\leq  2 \mathbb{E}_{\mathbf{\hat y}} \inf _{0<{\xi_{1}}< {B_F} / 2}\left(4 {\xi_{1}}+\frac{12}{\sqrt{m}} \int_{\xi_1}^{ B_F/ 2} \sqrt{\log \mathcal{N}\left(\epsilon, {{\SD_{X}} \circ \CF}_{\left.\right|_{\mathbf{\hat y}}}, \|\cdot\|_{\infty}\right)} d \epsilon\right),
\end{aligned}
\end{equation*}
\begin{equation*}
\begin{aligned}
(\mathrm{II}) :\mathbb{E}_{\mathbf{\hat x}}\sup _{\text{$l_x \in \SD_X$}}&\lvert\mathbb{E}_{\mu}[ l_x(\mathbf{x})]-\frac{1}{n} \sum_{i=1}^{n} l_x\left(\mathbf{x}_i\right)\rvert\\
\leq & 2 \mathbb{E}_{\mathbf{\hat x}} \inf _{0<{\xi_{2}}< 1 / 2}\left(4 {\xi_{2}}+\frac{12}{\sqrt{n}} \int_{\xi_2}^{1 / 2} \sqrt{\log \mathcal{N}\left(\epsilon, {\SD_X}_{\left.\right|_{\mathbf{\hat x}}}, \|\cdot\|_{\infty}\right)} d \epsilon\right).
\end{aligned}
\end{equation*}

\end{proof}

We next provide the bounding of cycle-consistency error following a similar strategy.
\begin{lemma}[Estimation Error in Cycle-consistency Type]
\label{lemma:Cycle}
   Let $\mu, \nu$ be the target distribution over the compact domain $X$, $Y$ on $[0,1]^d$, given $n$ i.i.d training samples as $\left\{\mathbf{x}_i\right\}_{i=1}^{n}$ from $\mu$ and $m$ i.i.d training samples $\left\{\mathbf{y}_i\right\}_{i=1}^{m}$ from $\nu$. Let ${\CF=\mathcal{N}\mathcal{N}(\mathcal{W}_F,\mathcal{L}},B_{F})$ and ${\CG=\mathcal{N}\mathcal{N}(\mathcal{W}_G,\mathcal{L}},B_{G})$ be the function classes of generators $F$, $G$ as defined in Section \ref{sec:form}. We denote that $\sup _{g \in \CG, f \in \CF}\|f\circ g\|_{\infty} \leq B_FB_G$ and $\sup _{g \in \CG, f \in \CF}\|g\circ f\|_{\infty} \leq B_GB_F$. Then, we can get the upper bound with a probability of $1-4 \delta$ (where $\delta=\min\{\delta_1, \delta_2\}$),
    \begin{equation*}
    \begin{aligned}
         \CL_{\CF,\CG}(\mu,\nu)&\leq 16 \mathbb{E}_{\mathbf{\hat y}} \inf _{0<{\xi_{1}}< {B_GB_F} / 2}\left( {\xi_{1}}+\frac{3}{\sqrt{m}} \int_{\xi_1}^{ {B_GB_F} / 2} \sqrt{\log \mathcal{N}\left(\epsilon, {\CG \circ \CF}_{\left.\right|_{\mathbf{\hat y}}}, \|\cdot\|_{\infty}\right)} d \epsilon\right) \\
         &\quad +16 \mathbb{E}_{\mathbf{\hat x}} \inf _{0<{\xi_{2}}< {B_FB_G} / 2}\left( {\xi_{2}}+\frac{3}{\sqrt{n}} \int_{\xi_2}^{ {B_FB_G} / 2} \sqrt{\log \mathcal{N}\left(\epsilon, {\CF \circ \CG}_{\left.\right|_{\mathbf{\hat x}}}, \|\cdot\|_{\infty}\right)} d \epsilon\right)\\
         &\quad +2 B_FB_G \sqrt{\frac{2 \log \frac{1}{\delta_1}}{n}}+2 {B_GB_F} \sqrt{\frac{2 \log \frac{1}{\delta_2}}{m}},
    \end{aligned}
    \end{equation*}
    where $\CL_{\CF,\CG}(\mu,\nu)\coloneqq \CL_{cyc}(\mu,\nu,\hat{F},\hat{G}) - \CL_{cyc}(\hat\mu,\hat\nu,\hat{F},\hat{G}) +\CL_{cyc}(\hat\mu,\hat\nu,\tilde{F},\tilde{G}) - \CL_{cyc}(\mu,\nu,\tilde{F},\tilde{G})$.
 \end{lemma}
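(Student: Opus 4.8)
The plan is to reproduce, for the cycle-consistency functional, the decomposition-plus-chaining argument already used in Lemma \ref{lemma:Generalization}. First I would separate the two brackets in the definition of $\CL_{\CF,\CG}(\mu,\nu)$, namely the one built from the empirical minimizers $(\hat F,\hat G)$ and the one built from $(\tilde F,\tilde G)$. Since $\hat F,\hat G$ and $\tilde F,\tilde G$ all lie in the network classes $\CF$ and $\CG$, each bracket is dominated by $\sup_{f\in\CF,\,g\in\CG}\bigl|\CL_{cyc}(\mu,\nu,f,g)-\CL_{cyc}(\hat\mu,\hat\nu,f,g)\bigr|$. Expanding $\CL_{cyc}$ into its two expectations and splitting the $\mu$-part from the $\nu$-part reduces everything to controlling $T_X:=\sup_{f\in\CF,g\in\CG}\bigl|\BE_{\mathbf x\sim\mu}[\Vert\mathbf x-f(g(\mathbf x))\Vert_1]-\frac1n\sum_{i=1}^n\Vert\mathbf x_i-f(g(\mathbf x_i))\Vert_1\bigr|$ together with the symmetric quantity $T_Y$ over $\nu$ (with $g\circ f$ in place of $f\circ g$), so that $\CL_{\CF,\CG}(\mu,\nu)\le 2T_X+2T_Y$.

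Next I would bound each supremum by a concentration term plus a complexity term. Each generator in $\NN(\CW,\CL,B)$ has Lipschitz constant at most $B$ on $[0,1]^d$, so $f\circ g$ and $g\circ f$ are uniformly bounded in sup-norm by a multiple of $B_FB_G$, and hence the integrands $\Vert\cdot-f(g(\cdot))\Vert_1$ are uniformly bounded by $O(B_FB_G)$. McDiarmid's bounded-difference inequality, applied to the (at most) four boundedly-varying suprema produced by the two brackets, then yields with probability $1-4\delta$ (with $\delta=\min\{\delta_1,\delta_2\}$) a bound of the form $T_X\le\BE[T_X]+O\bigl(B_FB_G\sqrt{\log(1/\delta_1)/n}\bigr)$ and $T_Y\le\BE[T_Y]+O\bigl(B_GB_F\sqrt{\log(1/\delta_2)/m}\bigr)$. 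For the expectations I would invoke Lemma \ref{lemma:statistical error} with the loss classes $\mathcal H_X=\{\mathbf x\mapsto\Vert\mathbf x-f(g(\mathbf x))\Vert_1:f\in\CF,g\in\CG\}$ and $\mathcal H_Y$ its $Y$-counterpart, whose sup-norms are bounded by $O(B_FB_G)$, giving Dudley entropy integrals $\frac1{\sqrt n}\int\sqrt{\log\mathcal N(\epsilon,\mathcal H_X|_{\hat x},\Vert\cdot\Vert_\infty)}\,d\epsilon$ and its analogue over $Y$.

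Finally I would convert the covering numbers of the loss classes into covering numbers of the bare composition classes: subtracting the fixed sample point and using $\bigl|\Vert\mathbf x_i-u_i\Vert_1-\Vert\mathbf x_i-v_i\Vert_1\bigr|\le d\Vert u_i-v_i\Vert_\infty$ gives $\mathcal N(\epsilon,\mathcal H_X|_{\hat x},\Vert\cdot\Vert_\infty)\le\mathcal N(\epsilon/d,(\CF\circ\CG)|_{\hat x},\Vert\cdot\Vert_\infty)$, and likewise $\mathcal H_Y$ is controlled by $(\CG\circ\CF)|_{\hat y}$; absorbing the factor $d$ into the constants reproduces exactly the covering-number terms in the statement. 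Collecting the concentration terms with the two entropy integrals then gives the claimed inequality.

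I expect the main obstacle to be the bookkeeping around the composed networks rather than any deep inequality: one must check that $\tilde F,\tilde G$ really belong to $\CF,\CG$ so that absorbing them into the suprema is legitimate; that the uniform sup-norm bound $O(B_FB_G)$ on $f\circ g$ and $g\circ f$ — used both as the bounded-difference constant in McDiarmid and as the upper limit of the Dudley integral — is correct under the product-of-norms convention of Section \ref{sec:form}; and that the reduction of the $\ell_1$-distance loss to $\CF\circ\CG$ (resp. $\CG\circ\CF$) is done with the right rescaling of the covering radius. The subsequent estimate of $\mathcal N(\cdot,(\CF\circ\CG)|_{\hat x},\cdot)$ in terms of $\CW,\CL,B_F,B_G$ is not needed for this lemma; it is carried out later when the entropy integrals are evaluated.
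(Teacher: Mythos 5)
Your proposal follows essentially the same route as the paper: decompose $\CL_{\CF,\CG}$ into the two bracketed deviations, dominate each by a supremum over $\CF\times\CG$, split the $\mu$-part from the $\nu$-part, apply McDiarmid's inequality for concentration, and invoke Lemma~\ref{lemma:statistical error} (Dudley chaining) for the expectations. The paper's own proof of Lemma~\ref{lemma:Cycle} is in fact only a sketch — it performs the decomposition and then defers the rest to ``a strategy similar to the proof of generalization type'' — so you are fleshing out what the paper leaves implicit rather than diverging from it.

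One step you make explicit that the paper silently elides: Lemma~\ref{lemma:statistical error} requires covering numbers of the \emph{scalar} loss class $\mathcal H_X=\{\mathbf{x}\mapsto\Vert\mathbf{x}-F(G(\mathbf{x}))\Vert_1\}$, while the stated bound uses $\mathcal N(\epsilon,(\CF\circ\CG)|_{\hat{\mathbf x}},\Vert\cdot\Vert_\infty)$ for the $\BR^d$-valued composition class. Your reverse-triangle-inequality reduction $\mathcal N(\epsilon,\mathcal H_X|_{\hat{\mathbf x}},\Vert\cdot\Vert_\infty)\le\mathcal N(\epsilon/d,(\CF\circ\CG)|_{\hat{\mathbf x}},\Vert\cdot\Vert_\infty)$ is the right way to bridge this, but note that it produces a shrunk covering radius $\epsilon/d$, and the cycle-loss values are bounded by $d(1+B_FB_G)$ rather than $B_FB_G$, so the explicit numerical coefficients ($16$, $3$, $2B_FB_G$) in the stated inequality do not literally survive — a $d$-dependent factor enters both the Dudley integral limit and the McDiarmid constant. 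This is a genuine sloppiness, but it is present in the paper itself (whose proof simply writes $(\hat f\circ\hat g)(\mathbf{x})$ where the scalar loss is meant and never accounts for the $\ell_1$ aggregation over coordinates), and you correctly flag the rescaling of the covering radius as a point to verify. So the proposal is correct modulo exactly the same constant-level gaps as the original.
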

   \begin{proof}
   \label{proof:Cycle}
   \begin{equation*}
           \CL_{\CF,\CG}(\mu,\nu)\coloneqq \CL_{cyc}(\mu,\nu,\hat{F},\hat{G}) - \CL_{cyc}(\hat\mu,\hat\nu,\hat{F},\hat{G}) +\CL_{cyc}(\hat\mu,\hat\nu,\tilde{F},\tilde{G}) - \CL_{cyc}(\mu,\nu,\tilde{F},\tilde{G})
       \end{equation*}
       For $\CL_{cyc}(\mu,\nu,\hat{F},\hat{G}) - \CL_{cyc}(\hat\mu,\hat\nu,\hat{F},\hat{G})$, we can write it as,
       \begin{equation*}
    \begin{aligned}
        \CL_{cyc}(\mu,\nu,\hat{F},\hat{G}) - \CL_{cyc}(\hat\mu,\hat\nu,\hat{F},\hat{G}) 
        = &\Bigl[\BE_{\mathbf{x}\sim\mu}[\Vert{\mathbf{x}-\hat{F}(\hat{G}(\mathbf{x}))}\Vert] + \BE_{\mathbf{y}\sim\nu}[\Vert{\mathbf{y}-\hat{G}(\hat{F}(\mathbf{y}))}\Vert]\Bigr]\\
        & -\Bigl[\frac{1}{n}\sum_i\Vert{\mathbf{x}_i-\hat{F}(\hat{G}(\mathbf{x}_i))}\Vert + \frac{1}{m}\sum_j\Vert{\mathbf{y}_j-\hat{G}(\hat{F}(\mathbf{y}_j))}\Vert\Bigr]\\
        \leq& \lvert\frac{1}{n} \sum_{i=1}^{n} \left(\hat{f}\circ\hat{g}\right)\left(\mathbf{x}_i\right)-\mathbb{E}_{{\hat{f}\circ\hat{g}}_{\#} \mu} [\left(\hat{f}\circ\hat{g}\right)\left(\mathbf{x}\right)]\rvert\\
        &+\lvert\frac{1}{m} \sum_{i=1}^{m} \left(\hat{g}\circ\hat{f}\right)\left(\mathbf{y}_i\right)-\mathbb{E}_{{\hat{g}\circ\hat{f}}_{\#} \nu} [\left(\hat{g}\circ\hat{f}\right)\left(\mathbf{y}\right)]\rvert\\
    \end{aligned}
    \end{equation*}
    For $\CL_{cyc}(\hat\mu,\hat\nu,\tilde{F},\tilde{G}) - \CL_{cyc}(\mu,\nu,\tilde{F},\tilde{G})$, we can write it as,
       \begin{equation*}
    \begin{aligned}
        \CL_{cyc}(\hat\mu,\hat\nu,\tilde{F},\tilde{G}) - \CL_{cyc}(\mu,\nu,\tilde{F},\tilde{G}) =
        &\Bigl[\BE_{\mathbf{x}\sim\mu}[\Vert{\mathbf{x}-\tilde{F}(\tilde{G}(\mathbf{x}))}\Vert] + \BE_{\mathbf{y}\sim\nu}[\Vert{\mathbf{y}-\tilde{G}(\tilde{F}(\mathbf{y}))}\Vert]\Bigr]\\
        &-\Bigl[\frac{1}{n}\sum_i\Vert{\mathbf{x}_i-\tilde{F}(\tilde{G}(\mathbf{x}_i))}\Vert + \frac{1}{m}\sum_j\Vert{\mathbf{y}_j-\tilde{G}(\tilde{F}(\mathbf{y}_j))}\Vert\Bigr]\\
        \leq&\lvert\frac{1}{n} \sum_{i=1}^{n} \left(\tilde{f}\circ\tilde{g}\right)\left(\mathbf{x}_i\right)-\mathbb{E}_{{\tilde{f}\circ\tilde{g}}_{\#} \mu} [\left(\tilde{f}\circ\tilde{g}\right)\left(\mathbf{x}\right)] \rvert\\
        &+\lvert\frac{1}{m} \sum_{i=1}^{m} \left(\tilde{g}\circ\tilde{f}\right)\left(\mathbf{y}_i\right)-\mathbb{E}_{{\tilde{g}\circ\tilde{f}}_{\#} \nu} [\left(\tilde{g}\circ\tilde{f}\right)\left(\mathbf{y}\right)] \rvert\\
    \end{aligned}
    \end{equation*}
    Similarly, in this case, we estimate the consistency error following a strategy similar to the proof of generalization type to process the upper bounds.
    \end{proof}

Lemma\ref{lemma:Generalization} and Lemma\ref{lemma:Cycle} come up with the upper-bounded estimation error based on the covering number of the discriminators' function classes, e.g., $\mathcal{N}\left(\epsilon, {\SD_X}_{\left.\right|_{\mathbf{\hat x}}}, \|\cdot\|_{\infty}\right)$, and the compositional function classes, e.g., $\mathcal{N}\left(\epsilon, {\CF \circ \CG}_{\left.\right|_{\mathbf{\hat x}}}, \|\cdot\|_{\infty}\right)$. As the structure of the CycleGAN is defined in Section \ref{sec:form}, the parameters of the generators' and the discriminators' neural network are bounded. We further estimate the upper bound of the covering number.
\begin{lemma}
\label{lemma:cover_num}
    Let $\mathcal{H}_1 := \mathcal{N}\mathcal{N}(\mathcal{W},\mathcal{J})$ and $\mathcal{H}_2 := \mathcal{N}\mathcal{N}(\mathcal{W^\prime},\mathcal{J})$ be the class of functions defined by a multi-layer $\operatorname{ReLU}$ neural network on $[0,1]^d$,
    $$\mathcal{H}_1=\left\{h_1: h_1(\mathbf{ x})= h_1^{[J]}, h_1^{[j]}=\sigma\left(\mathbf{A}_j^{\top} h_1^{[j-1]}+\mathbf{b}_j\right), h_1^{[0]}=\mathbf{ x}\right\}$$
    $$
    \mathcal{H}_2=\left\{h_2: h_2(\mathbf{ x})= h_2^{[J]}, h_2^{[j]}=\sigma\left(\mathbf{A}_j^{\prime\top} h_2^{[j-1]}+\mathbf{b^\prime}_j\right), h_2^{[0]}=\mathbf{ x}\right\}$$
    with the parameter constraint
$$
\begin{aligned}
\Omega_1= & \left\{ \mathbf{A}_j \in \mathbb{R}^{\mathcal{W}_{j-1} \times \mathcal{W}_j}, \mathbf{b}_j \in \mathbb{R}^{\mathcal{W}_j}:\right. \left.\max \left\{\left\|\mathbf{A}_{j,:, i}\right\|_1,\left\|\mathbf{b}_j\right\|_{\infty}\right\} \leqslant D .\right\}\\
\Omega_2= & \left\{ \mathbf{A}^\prime_j \in \mathbb{R}^{\mathcal{W^\prime}_{j-1} \times \mathcal{W^\prime}_j}, \mathbf{b}^\prime_j \in \mathbb{R}^{\mathcal{W^\prime}_j}:\right. \left.\max \left\{\left\|\mathbf{A}^\prime_{j,:, i}\right\|_1,\left\|\mathbf{b}^\prime_j\right\|_{\infty}\right\} \leqslant D .\right\}
\end{aligned}
$$
Then the covering number of $\mathcal{H}_1$ can be upper bounded as
$$
\mathcal{N}\left(\epsilon,\mathcal{H}_1, \|\cdot\|_{\infty}\right) \leq C\left(D^{J} / \epsilon\right)^M,
$$
and for the compositional function class $\mathcal{H}_1 \circ \mathcal{H}_2$,
$$
\mathcal{N}\left(\epsilon,\mathcal{H}_1 \circ \mathcal{H}_2, \|\cdot\|_{\infty}\right) \leq C\left(D^{J} / \epsilon\right)^{M^\prime},
$$
where $M=\sum_{i=1}^J \mathcal{W}_j \mathcal{W}_{j-1}+\sum_{i=1}^J \mathcal{W}_j\leq \mathcal{W}^2 \mathcal{J}$ and  $M^\prime=\sum_{i=1}^J \mathcal{W}_j \mathcal{W}_{j-1}+\sum_{i=1}^J \mathcal{W}_j+\sum_{i=1}^J \mathcal{W}^\prime_j \mathcal{W}^\prime_{j-1}+\sum_{i=1}^J \mathcal{W}^\prime_j\leq 3\mathcal{W}_{\max}^2 \mathcal{J}$ as $\mathcal{W}_{\max}=\max\{\mathcal{W},\mathcal{W}^\prime\}$ and $C$ is a constant only depending on $\mathcal{W}_j, \mathcal{J}$.

\end{lemma}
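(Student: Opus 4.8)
The plan is to bound the covering number of each network class by that of its finite-dimensional parameter set, using that under the stated column-$\ell_{1}$ and $\ell_{\infty}$ constraints the network is Lipschitz in its weights and biases, and then to obtain the bound for the composition by treating $h_{1}\circ h_{2}$ as a single, deeper ReLU network (equivalently, by a product of covers). First I would record two a priori estimates. Since $(A_{j}^{\top}z)_{i}=\langle (A_{j})_{:,i},z\rangle$, the constraint $\|(A_{j})_{:,i}\|_{1}\le D$ gives $\|A_{j}^{\top}z\|_{\infty}\le D\|z\|_{\infty}$, so each affine--ReLU layer obeys $\|h^{[j]}\|_{\infty}\le D\|h^{[j-1]}\|_{\infty}+D$; iterating from $\|h^{[0]}\|_{\infty}=\|\mathbf{x}\|_{\infty}\le 1$ yields $\|h^{[j]}\|_{\infty}\le R_{j}$ with $R_{j}\lesssim \mathcal{J}D^{\mathcal{J}}$ for $D\ge 1$, so every $h\in\mathcal{H}_{1}\cup\mathcal{H}_{2}$ has range in a fixed box of side $\lesssim \mathcal{J}D^{\mathcal{J}}$; the same operator-norm bound and the $1$-Lipschitzness of $\sigma$ also make every such $h$ Lipschitz in its input with constant $\le D^{\mathcal{J}}$.

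Next I would prove Lipschitz dependence on the parameters. Collecting the weights and biases of a network into a vector $\theta$, I would bound $\|h_{\theta}-h_{\theta'}\|_{L^{\infty}([0,1]^{d})}$ by telescoping, swapping $\theta$ for $\theta'$ one layer at a time: the $k$-th increment is a perturbation introduced at layer $k$, of size $\lesssim(\mathcal{W}R_{k-1}+1)\|\theta-\theta'\|_{\infty}$ (with $R_{k-1}$ bounding the layer-$k$ input), amplified by the factor $\le D^{\mathcal{J}-k}$ contributed by the subsequent layers. Summing over $k=1,\dots,\mathcal{J}$ gives $\|h_{\theta}-h_{\theta'}\|_{L^{\infty}([0,1]^{d})}\le L\,\|\theta-\theta'\|_{\infty}$ with $L\lesssim \mathcal{W}\mathcal{J}^{2}D^{\mathcal{J}-1}$.

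Then I would cover the parameters and conclude. The admissible parameters lie in a box in $\mathbb{R}^{M}$, $M=\sum_{j}\mathcal{W}_{j}\mathcal{W}_{j-1}+\sum_{j}\mathcal{W}_{j}$, each coordinate confined to an interval of length $\le 2D$ (entries of $A_{j}$ have modulus $\le D$ since columns have $\ell_{1}$-norm $\le D$, and $\|b_{j}\|_{\infty}\le D$); hence it has an $(\epsilon/L)$-cover in $\|\cdot\|_{\infty}$ of size $\le (3DL/\epsilon)^{M}$, and pushing those centers through the network turns it, by the previous step, into an $\epsilon$-cover of $\mathcal{H}_{1}$ in $\|\cdot\|_{\infty}$. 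This gives $\mathcal{N}(\epsilon,\mathcal{H}_{1},\|\cdot\|_{\infty})\le (3DL/\epsilon)^{M}\le C(D^{\mathcal{J}}/\epsilon)^{M}$ with $C$ depending only on the widths and depth (it absorbs a power of $\mathcal{W}\mathcal{J}$). For the composition I would regard $h_{1}\circ h_{2}$ as one ReLU network of depth $2\mathcal{J}$ with $M'=M_{1}+M_{2}$ parameters on the bounded input domain $[0,1]^{d}$ and rerun the three steps verbatim --- equivalently, pair an $(\epsilon/2)$-cover of $\mathcal{H}_{1}$ over the (bounded) range of $\mathcal{H}_{2}$ with an $(\epsilon/(2D^{\mathcal{J}}))$-cover of $\mathcal{H}_{2}$, using $\|h_{1}\circ h_{2}-h_{1}'\circ h_{2}'\|_{\infty}\le\|h_{1}-h_{1}'\|_{\infty}+\mathrm{Lip}(h_{1}')\|h_{2}-h_{2}'\|_{\infty}$ --- to get $\mathcal{N}(\epsilon,\mathcal{H}_{1}\circ\mathcal{H}_{2},\|\cdot\|_{\infty})\le C(D^{\mathcal{J}}/\epsilon)^{M'}$, with $M'\le 3\mathcal{W}_{\max}^{2}\mathcal{J}$. (Strictly, the power of $D$ produced this way scales with the combined depth; since only $\log\mathcal{N}$ enters the estimation-error bounds, the precise exponent is immaterial and is what the statement abbreviates by $\mathcal{J}$.)

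The step I expect to be the main obstacle is the parameter-Lipschitz estimate: one must carefully couple the layerwise output bounds $R_{j}$ with the per-layer amplification $D^{\mathcal{J}-k}$ so that the constant $L$ is genuinely polynomial in $\mathcal{W}$ and $\mathcal{J}$ and carries the right power of $D$; once that is in hand, the volumetric covering of the parameter box and the reduction of the composition to a deeper single network are routine.
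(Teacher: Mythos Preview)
Your proposal is correct and follows essentially the same route as the paper: bound the hidden-layer outputs via the recursion $\|h^{[j]}\|_\infty\le D\|h^{[j-1]}\|_\infty+D$, establish a parameter-Lipschitz bound by layerwise telescoping (the paper obtains $\|h^{[J]}-\tilde h^{[J]}\|_\infty\le JD^{J-1}\max_j(3\|A_j-\tilde A_j\|_1+\|b_j-\tilde b_j\|_\infty)$, which is your estimate with the column-$\ell_1$ norm in place of your entrywise $\ell_\infty$, hence without your extra $\mathcal{W}$ factor), and then transfer a cover of the parameter box to a cover of the function class. The only notable difference is the composition step: the paper carries out a direct layerwise perturbation of $h_1\circ h_2$ rather than your cleaner ``view it as one depth-$2\mathcal{J}$ network'' or ``product of covers'' reduction, and your parenthetical remark that the honest power of $D$ should scale with the combined depth is in fact more careful than the paper's bookkeeping on that point (which, as you note, is immaterial downstream).
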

\begin{proof}
We first define the parameter constraint,
\begin{equation}
\label{eq:cn_constraint}
   \begin{aligned}
   &\Omega^\prime_1=\left\{\mathbf{A}_j \in \mathbb{R}^{d_{j-1}\times d_j} : \max_{i,j} \quad\left\|\mathbf{A}_{j, :, i}\right\|_1 \leq D\right\} \subseteq \mathbb{R}^{\sum_{j=1}^J d_{j-1} d_j}\\
&\Omega^\prime_2=\left\{\mathbf{b}_j \in \mathbb{R}^{ d_j} : \max_{j} \quad\left\|\mathbf{b}_j\right\|_{\infty} \leq D\right\} \subseteq \mathbb{R}^{\sum_{j=1}^Jd_j}\\
&\Omega^\prime_3=\left\{\mathbf{A}_j^\prime \in \mathbb{R}^{d_{j-1}\times d_j} : \max_{i,j} \quad\left\|\mathbf{A}^\prime_{j, :, i}\right\|_1 \leq D\right\} \subseteq \mathbb{R}^{\sum_{j=1}^J d_{j-1} d_j},\\
&\Omega^\prime_4=\left\{\mathbf{b}_j^\prime \in \mathbb{R}^{ d_j} : \max_{j} \quad\left\|\mathbf{b}^\prime_j\right\|_{\infty} \leq D\right\} \subseteq \mathbb{R}^{\sum_{j=1}^Jd_j}.
\end{aligned}
\end{equation}
We next analyse the bounding of $\left\|h_1^{[j]}-\tilde h_1^{[j]}\right\|_{\infty}$. For any $j=1, \cdots, \mathrm{J}$,
$$
\begin{aligned}
& \left\|h_1^{[j]}\right\|_{\infty}=\left\|\sigma\left(w_k^{\top} h_1^{[j-1]}+\mathbf{b}_k\right)\right\|_{\infty}  \leq D\left\|h_1^{[j-1]}\right\|_{\infty}+D \\
& \left\|h_1^{[j]}\right\|_{\infty}+\frac{D}{D-1} \leq D\left(\left\|h_1^{[j-1]}\right\|_{\infty}+\frac{D}{D-1}\right) \\
&\left\|h_1^{[j]}\right\|_{\infty} \leqslant D^j\left(1+\frac{D}{D-1}\right) \\
&\left\|h_1^{[J]}\right\|_{\infty} \leq 3D^J \\
\end{aligned}
$$
    $$
    \begin{aligned}
        \left\|h_1^{[j]}-\tilde h_1^{[j]}\right\|_{\infty}&=\left\|\sigma\left(\mathbf{A}_j^{\top} h_1^{[j-1]}+\mathbf{b}_j\right)-\sigma\left({ \mathbf{\tilde A}}_j^{\top} \tilde h_1^{[j-1]}+\mathbf{\tilde b}_j\right)\right\|_{\infty}\\
        & \leq\left\|\mathbf{A}_j^{\top} h_1^{[j-1]}+\mathbf{b}_j-\tilde{\mathbf{A}}^{\top}_j \tilde{h_1}^{[j-1]}-\tilde{\mathbf{b}}_j\right\|_{\infty} \\
        &\leq {\left\|\mathbf{A}_j-\tilde{\mathbf{A}}_j\right\|_1}\left\|h_1^{[j-1]}\right\|_{\infty}+\left\|{\mathbf{\tilde A}_j}\right\|_1\left\|h_1^{[j-1]}-\tilde{h_1}^{[j-1]}\right\|_{\infty}+\left\|\mathbf{b}_j-\mathbf{\tilde b}_j\right\|_{\infty}\\
        &\leq  3D^{j-1}\left\|\mathbf{A}_j-\tilde{\mathbf{A}}_j\right\|_1+D\left\|h_1^{[j-1]}-\tilde{h_1}^{[j-1]}\right\|_{\infty} +\left\|\mathbf{b}_j-\mathbf{\tilde b}_j\right\|_{\infty}\\
        &\leq JD^{(J-1)}\max_j \left(3\left\|\mathbf{A}_j-\tilde{\mathbf{A}}_j\right\|_1\right. \left.+\left\|\mathbf{b}_j-\mathbf{\tilde b}_j\right\|_{\infty}\right)
    \end{aligned}
    $$
    In this way, for the covering number $ \mathcal{N}\left(\epsilon,\mathcal{H}_1, \|\cdot\|_{\infty}\right)$ we have,
    $$
    \begin{aligned}
        &\mathcal{N}\left(\epsilon,\mathcal{H}_1, \|\cdot\|_{\infty}\right) \leq \mathcal{N}\left(\Omega^\prime_1, \frac{\epsilon}{2 JD^J},\|\cdot\|_1\right) \cdot \mathcal{N}\left(\Omega^\prime_2, \frac{\epsilon}{2 JD^J},\|\cdot\|_{\infty}\right)
    \end{aligned}
    $$
    As the parameters are defined in Eq.\eqref{eq:cn_constraint}, we can get that,
    $$
\mathcal{N}\left(\epsilon,\mathcal{H}_1, \|\cdot\|_{\infty}\right) \leq C\left(D^{J} / \epsilon\right)^M,
$$
where $M=\sum_{i=1}^J \mathcal{W}_j \mathcal{W}_{j-1}+\sum_{i=1}^J \mathcal{W}_j\leq \mathcal{W}^2 \mathcal{J}$.
For the covering number of the composition function class, it is easy to get that,
    $$
    \begin{aligned}
        &\left\|{(h_1 \circ h_2)}^{[j]}-{(\tilde h_1 \circ \tilde h_2)}^{[j]}\right\|_{\infty}\\
        =&\left\|\sigma\left(\mathbf{A}_j^{\top}(\sigma\left(\mathbf{A}_j^{\prime\top} h_2^{[j-1]}+\mathbf{b}^\prime_j\right))+\mathbf{b}_j\right)-\sigma\left({ \mathbf{\tilde A}}_j^{\top} (\sigma\left((\mathbf{\tilde A}_j^{\prime\top} \tilde h_2^{[j-1]}+ \mathbf{\tilde b}^\prime_j\right))+ \mathbf{\tilde b}_j\right)\right\|_{\infty}\\
        \leq& \left\|\left(\mathbf{A}_j^{\top}(\sigma\left((\mathbf{A}_j^{\prime\top} h_2^{[j-1]}+\mathbf{b}^\prime_j\right))+\mathbf{b}_j\right)-\left({\mathbf{\tilde A}}_j^{\top} (\sigma\left( (\mathbf{\tilde A}_j^{\prime\top} \tilde h_2^{[j-1]}+\mathbf{\tilde b}^\prime_j\right))+\mathbf{\tilde b}_j\right)\right\|_{\infty}\\
         \leq&\left\|\left(\mathbf{A}_j^{\top}(\sigma\left((\mathbf{A}_j^{\prime\top} h_2^{[j-1]}+\mathbf{b}^\prime_j\right))+\mathbf{b}_j\right)-\left({\mathbf{\tilde A}}_j^{\top} (\sigma\left((\mathbf{A}_j^{\prime\top}  h_2^{[j-1]}+ \mathbf{b}^\prime_j\right))+ \mathbf{\tilde b}_j\right)\right\|_{\infty}\\
        &+\left\|\left({\mathbf{\tilde A}}_j^{\top} (\sigma\left((\mathbf{A}_j^{\prime\top}  h_2^{[j-1]}+ \mathbf{b}^\prime_j\right))+ \mathbf{b}_j\right)-\left({\mathbf{\tilde A}}_j^{\top} (\sigma\left( (\mathbf{\tilde A}_j^{\prime\top} \tilde h_2^{[j-1]}+\mathbf{\tilde b}^\prime_j\right))+\mathbf{\tilde b}_j\right)\right\|_{\infty}\\
        \leq& {\left\|\mathbf{A}_j -\tilde{\mathbf{A}}_j\right\|_1}\left\|\sigma\left((\mathbf{A}_j^{\prime\top} h_2^{[j-1]}+\mathbf{b}^\prime_j\right)\right\|_{\infty}+2\left\|\mathbf{b}_j-\mathbf{\tilde b}_j\right\|_{\infty}\\
        &+\|\tilde{\mathbf{A}}_j \|_1\left\|\left((\mathbf{A}_j^{\prime\top}  h_2^{[j-1]}+ \mathbf{b}^\prime_j\right)-\left( (\mathbf{\tilde A}_j^{\prime\top} \tilde h_2^{[j-1]}+\mathbf{\tilde b}^\prime_j\right)\right\|_{\infty}\\
        \leq&  3D^{j-1}\left\|\mathbf{A}_j-\tilde{\mathbf{A}}_j\right\|_1+2\left\|\mathbf{b}_j-\mathbf{\tilde b}_j\right\|_{\infty}\\
        &+\left\|\mathbf{A}_j\right\|_1\left(\left\|\mathbf{A}^\prime_j-\mathbf{\tilde A}^\prime_j\right\|_1 \left\|h_2^{[j-1]}\right\|_{\infty}-\left\|\mathbf{\tilde A}^\prime_j\right\|_1\left\|{h_2}^{[j-1]}-\tilde h_2^{[j-1]}\right\|_{\infty}+\left\|\mathbf{b}^\prime_j-\mathbf{\tilde b}^\prime_j\right\|_{\infty}\right)\\
        \leq &4JD^{(J+1)}\max_j \left(\left\|\mathbf{A}_j-\tilde{\mathbf{A}}_j\right\|_1+\left\|\mathbf{A}^\prime_j-\tilde{\mathbf{A}}_j^\prime\right\|_1+\left\|\mathbf{b}_j-\mathbf{\tilde b}_j\right\|_{\infty}+\left\|\mathbf{b}^\prime_j-\mathbf{\tilde b}^\prime_j\right\|_{\infty}\right)
    \end{aligned}
    $$
    As we have,
     $$
    \begin{aligned}
        &\mathcal{N}\left(\epsilon,\mathcal{H}_1 \circ \mathcal{H}_2, \|\cdot\|_{\infty}\right) \\
        &\leq \mathcal{N}\left(\Omega^\prime_1, \frac{\epsilon}{16 JD^J},\|\cdot\|_1\right) \cdot \mathcal{N}\left(\Omega^\prime_2, \frac{\epsilon}{16 JD^J},\|\cdot\|_{\infty}\right)\\
        & \quad \times \mathcal{N}\left(\Omega^\prime_3, \frac{\epsilon}{16 JD^J},\|\cdot\|_1\right)\cdot \mathcal{N}\left(\Omega^\prime_4, \frac{\epsilon}{16 JD^J},\|\cdot\|_{\infty}\right)
    \end{aligned}
    $$
    Following the parameters defined in Eq.\eqref{eq:cn_constraint}, we can get that,
    $$
\mathcal{N}\left(\epsilon,\mathcal{H}_1 \circ \mathcal{H}_2, \|\cdot\|_{\infty}\right) \leq C\left(D / \epsilon\right)^{M^\prime},
$$
where $M^\prime=\sum_{i=1}^J \mathcal{W}_j \mathcal{W}_{j-1}+\sum_{i=1}^J \mathcal{W}_j+\sum_{i=1}^J \mathcal{W}^\prime_j \mathcal{W}^\prime_{j-1}+\sum_{i=1}^J \mathcal{W}^\prime_j\leq 3\mathcal{W}_{\max}^2 \mathcal{J}$ as $\mathcal{W}_{\max}=\max\{\mathcal{W},\mathcal{W}^\prime\}$.
\end{proof}

We then provide the upper bound of the estimation error based on the bounded parameter sets of the generator and discriminator networks as Lemma \ref{lemma:est2}.
\begin{lemma}
\label{lemma:est2}
  Let $\mu, \nu$ be the target distribution over the compact domain $X$, $Y$ on $[0,1]^d$, given $n$ i.i.d training samples as $\left\{\mathbf{x}_i\right\}_{i=1}^{n}$ from $\mu$ and $m$ i.i.d training samples $\left\{\mathbf{y}_i\right\}_{i=1}^{m}$ from $\nu$. Let $\SD_X = \mathcal{N}\mathcal{N}(\mathcal{W}_{D_X},\mathcal{L},1)$ and $\SD_{Y} = \mathcal{N}\mathcal{N}(\mathcal{W}_{D_Y},\mathcal{L},1)$ be the function classes of discriminators $D_X,D_Y$, $\CF = \mathcal{N}\mathcal{N}(\mathcal{W}_F,\mathcal{L},B_{F})$ and $\CG = \mathcal{N}\mathcal{N}(\mathcal{W}_G,\mathcal{L},B_{G})$ be the function classes of generators $F,G$ as defined in Section \ref{sec:form}. Then, with probability $(1-12\delta)$ over randomness of the training samples and $\lambda>0$,
  \begin{equation*}
    \begin{aligned}
        L(\hat{F},\hat{G})-L(\tilde{F},\tilde{G})
		\leq  C B (\sqrt{\frac{\mathcal{W}^2\mathcal{L}}{m}}+\sqrt{\frac{\mathcal{W}^2\mathcal{L}}{n}}+\sqrt{\frac{\log \frac{1}{\delta}}{ m}}+ \sqrt{\frac{\log \frac{1}{\delta}}{ n}})
    \end{aligned}
\end{equation*}
where $\mathcal{W} := \max\{\mathcal{W}_{D_X},\mathcal{W}_{D_Y},\mathcal{W}_F,\mathcal{W}_G\}$ and $B:= \max\{ B_F, B_G\}$.
\end{lemma}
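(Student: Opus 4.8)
The plan is to chain together the structural results already established. First, Proposition \ref{prop:dec} bounds the estimation error $L(\hat F,\hat G)-L(\tilde F,\tilde G)$ (after discarding the non-positive term $\hat L(\hat F,\hat G)-\hat L(\tilde F,\tilde G)$) by the sum of three quantities: the generalization-type term $\CL_{\SD_X,\CF}(\mu,\nu)$, its mirror image in the $Y$ direction involving $\SD_Y$ and $\CG$, and the cycle-consistency-type term $\lambda\,\CL_{\CF,\CG}(\mu,\nu)$. I would then bound the two generalization terms by Lemma \ref{lemma:Generalization} (applied once in each direction, using the symmetric CycleGAN design) and the cycle term by Lemma \ref{lemma:Cycle}. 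This reduces everything to estimating the Dudley entropy integrals appearing on the right-hand sides of those two lemmas, for the function classes $\SD_X$, $\SD_X\circ\CF$, $\SD_Y\circ\CG$, $\CF\circ\CG$, and $\CG\circ\CF$.

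The core computation is feeding the covering-number bounds of Lemma \ref{lemma:cover_num} into these integrals. For a generator--discriminator composition such as $\SD_X\circ\CF$, I would apply Lemma \ref{lemma:cover_num} with $\mathcal{J}=\mathcal{L}$ and the per-layer bound $D$ taken so that $D^{\mathcal{L}}\asymp B$ (equivalently, re-running the Lipschitz-in-parameters estimate of Lemma \ref{lemma:cover_num} directly under the product-norm constraint defining $\NN(\mathcal{W},\mathcal{L},B)$), which yields $\log\mathcal{N}(\epsilon,\SD_X\circ\CF,\|\cdot\|_\infty)\lesssim \mathcal{W}^2\mathcal{L}\,\log(B/\epsilon)$ uniformly in the sample, so the outer expectation $\mathbb{E}_{\hat{\mathbf y}}$ is harmless. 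Hence $\sqrt{\log\mathcal{N}}\lesssim \mathcal{W}\sqrt{\mathcal{L}}\,\sqrt{\log(B/\epsilon)}$, and since $\int_0^{B/2}\sqrt{\log(B/\epsilon)}\,\dd\epsilon = O(B)$ (substitute $\epsilon=Bu$), letting $\xi_1\downarrow 0$ turns the integral term $\tfrac{3}{\sqrt m}\int_{\xi_1}^{B_F/2}\sqrt{\log\mathcal N}\,\dd\epsilon$ into a contribution of order $B\sqrt{\mathcal{W}^2\mathcal{L}/m}$; the pure-discriminator class $\SD_X$ has norm constraint $1$, so the same computation with $B$ replaced by $1$ gives $\sqrt{\mathcal{W}^2\mathcal{L}/n}\le B\sqrt{\mathcal{W}^2\mathcal{L}/n}$ once we assume $B\ge 1$. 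The McDiarmid remainders in Lemmas \ref{lemma:Generalization} and \ref{lemma:Cycle} are already of the form $B\sqrt{\log(1/\delta)/m}$ and $\sqrt{\log(1/\delta)/n}$. The cycle-consistency term is handled identically with $\CF\circ\CG$ and $\CG\circ\CF$ in place of $\SD_X\circ\CF$, the factor $\lambda$ and the dependence on the products of generator norms being absorbed into the constant $C$. Summing the finitely many contributions and collecting powers of $\mathcal{W},\mathcal{L},m,n$ produces the claimed bound with $\mathcal{W}=\max\{\mathcal{W}_{D_X},\mathcal{W}_{D_Y},\mathcal{W}_F,\mathcal{W}_G\}$ and $B=\max\{B_F,B_G\}$.

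For the confidence level, each of the three sub-bounds — generalization in the $X$ direction, generalization in the $Y$ direction, cycle-consistency — holds with probability at least $1-4\delta$ by Lemmas \ref{lemma:Generalization} and \ref{lemma:Cycle}, so a union bound over the three events gives the stated probability $1-12\delta$. I expect the main obstacle to be precisely the entropy-integral estimate: one must check that the polynomial covering numbers of Lemma \ref{lemma:cover_num}, after taking square roots and integrating up to scale $B$, reproduce exactly the advertised $B\sqrt{\mathcal{W}^2\mathcal{L}/m}$-type rate — in particular that the $\log(B/\epsilon)$ factor integrates to a constant multiple of $B$ without spawning extra logarithmic factors — and that translating from the product-norm parametrization used to define the networks to the per-layer parametrization of Lemma \ref{lemma:cover_num} does not inflate the exponent beyond $M\lesssim\mathcal{W}^2\mathcal{L}$. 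Everything else is bookkeeping: invoking the already-proved lemmas and tracking constants.
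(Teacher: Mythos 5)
Your proposal takes essentially the same approach as the paper's proof: decompose via Proposition \ref{prop:dec}, bound the resulting statistical errors with Lemmas \ref{lemma:Generalization} and \ref{lemma:Cycle}, feed the covering-number bounds of Lemma \ref{lemma:cover_num} into the Dudley entropy integrals, and combine with a union bound for the $1-12\delta$ confidence; your direct evaluation $\int_0^{B/2}\sqrt{\log(B/\epsilon)}\,\dd\epsilon = O(B)$ is in fact a cleaner way to close the entropy integral than the paper's version, which keeps a finite truncation level $\xi$ and optimizes over it. The one place you elide a real subtlety — one the paper's own proof also leaves hanging — is the cycle-consistency contribution: Lemma \ref{lemma:Cycle} produces terms of order $\lambda B_F B_G \lesssim \lambda B^2$, which is not $O(B)$ for arbitrary $\lambda>0$ and cannot literally be ``absorbed into a constant $C$'' that is supposed to be independent of $B$. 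The paper's proof actually concludes with an explicit $(2\lambda B + 2)$ prefactor and only recovers the lemma's advertised $O(B)$ form by choosing $\lambda = 1/B$ immediately afterward (in the passage to Theorem \ref{thm:est2}), so your write-up should either carry the $(2\lambda B + 2)$ prefactor explicitly or state the assumption $\lambda \lesssim 1/B$.
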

  \begin{proof}
We take the covering number of discriminator function classes $\mathcal{N}\left(\epsilon, {\SD_X}_{\left.\right|_{\mathbf{\hat x}}}, \|\cdot\|_{\infty}\right)$ into consideration first.
Following the analysis of Lemma \ref{lemma:cover_num}, we can get that the covering number of $\SD_X$ can be bounded as,
$$
\mathcal{N}\left(\epsilon,\SD_X, \|\cdot\|_{\infty}\right) \leq C\left(1 / \epsilon\right)^{\mathcal{W}_{D_X}^2 \mathcal{L}}
$$
Similarly, we can show that the compositional function classes $\mathcal{N}\left(\epsilon, {{\SD_{X}} \circ \CF}_{\left.\right|_{\mathbf{\hat y}}}, \|\cdot\|_{\infty}\right)$ can also be bounded as $\mathcal{W}_{\max}=\max\{\mathcal{W}_{D_X},\mathcal{W}_{F}\}$,
$$
\mathcal{N}\left(\epsilon,\SD_X, \|\cdot\|_{\infty}\right) \leq C\left(B_F / \epsilon\right)^{3(\mathcal{W}_{\max})^2 (\mathcal{L})}
$$
Apply the upper bound of the covering number to further bounding Lemma \ref{lemma:Generalization},
    \begin{align*}
        \CL_{\SD_X, \CF}(\mu,\nu) &\leq 16 \mathbb{E}_{\mathbf{\hat y}} \inf _{0<{\xi_{1}}< {B_F} / 2}\left( {\xi_{1}}+\frac{3}{\sqrt{m}} \int_{\xi_1}^{ {B_F} / 2} \sqrt{\log \mathcal{N}\left(\epsilon, {{\SD_{X}} \circ \CF}_{\left.\right|_{\mathbf{\hat y}}}, \|\cdot\|_{\infty}\right)} d \epsilon\right) \\
        &\quad + 32 \mathbb{E}_{\mathbf{\hat x}} \inf _{0<{\xi_{2}}< 1 / 2}\left({\xi_{2}}+\frac{3}{\sqrt{n}} \int_{\xi_2}^{1 / 2} \sqrt{\log \mathcal{N}\left(\epsilon, {\SD_X}_{\left.\right|_{\mathbf{\hat x}}}, \|\cdot\|_{\infty}\right)} d \epsilon\right)\\
        &\quad +2 B_F \sqrt{\frac{2\log \frac{1}{\delta_1}}{ m}}+ 2 \sqrt{\frac{2\log \frac{1}{\delta_2}}{ n}}\\
        &\leq  \inf _{0<{\xi_{1}}< {B_F} / 2}\left(16 \xi_1+48 \sqrt{\frac{C_1\mathcal{W}_{\max}^2 \mathcal{L}}{m}} \int_{\xi_1}^{B_F / 2} \sqrt{\log ( B_F / \epsilon)} d \epsilon\right)\\
        &\quad +\inf _{0<{\xi_{2}}< {1} / 2}\left(32 \xi_2+96 \sqrt{\frac{C_2\mathcal{W}_{D_X}^2 \mathcal{L}}{n}} \int_{\xi_2}^{1 / 2} \sqrt{\log (1 / \epsilon)} d \epsilon\right)\\
         &\quad +2 B_F \sqrt{\frac{2\log \frac{1}{\delta_1}}{ m}}+ 2 \sqrt{\frac{2\log \frac{1}{\delta_2}}{ n}}\\
        &\leq  \inf _{0<{\xi_{1}}< {B_F} / 2}\left(16 \xi_1+24 B_F\sqrt{\frac{C_1\mathcal{W}_{\max}^2 \mathcal{L}\log (B_F  / \xi_1)}{m}}\right)\\
        &\quad +\inf _{0<{\xi_{2}}< {1} / 2}\left(32 \xi_2+48 \sqrt{\frac{C_2\mathcal{W}_{D_X}^2 \mathcal{L}\log (1 / \xi_{2})}{n}} \right)\\
         &\quad +2 B_F \sqrt{\frac{2\log \frac{1}{\delta_1}}{ m}}+ 2  \sqrt{\frac{2\log \frac{1}{\delta_2}}{ n}}\\
& \leq C \left\{B_F (\sqrt{\frac{\mathcal{W}_{\max}^2 \mathcal{L}}{m}}+\sqrt{\frac{\log \frac{1}{\delta_1}}{ m}})+\sqrt{\frac{\mathcal{W}_{D_X}^2 \mathcal{L}}{n}}+ \sqrt{\frac{\log \frac{1}{\delta_2}}{ n}}\right\},
    \end{align*}.

The result of estimation error in the cycle-consistency type in Lemma \ref{lemma:Cycle} can be analyzed following a similar strategy. We then can give the upper bound of the forward generation process similarly in view of the symmetry in structure. Combining the upper bounds of the backward and forward translation, we summarize the main result of the estimation error bounding and get the result with a probability of $1-12 \delta$,
\begin{equation*}
    \begin{aligned}
        L(\hat{F},\hat{G})-L(\tilde{F},\tilde{G})
		\leq  C' &\{B \{(2\lambda B\mathcal+2)(\sqrt{\frac{\mathcal{W}^2\mathcal{L}}{m}}+\sqrt{\frac{\mathcal{W}^2\mathcal{L}}{n}})+\sqrt{\frac{\log \frac{1}{\delta}}{ m}}+ \sqrt{\frac{\log \frac{1}{\delta}}{ n}}\}\\
        +&\{(2\lambda B\mathcal+2)(\sqrt{\frac{\mathcal{W}^2\mathcal{L}}{m}}+\sqrt{\frac{\mathcal{W}^2\mathcal{L}}{n}})+\sqrt{\frac{\log \frac{1}{\delta}}{ m}}+ \sqrt{\frac{\log \frac{1}{\delta}}{ n}})\}\} \\
        \leq  C &B \{(2\lambda B\mathcal+2)(\sqrt{\frac{\mathcal{W}^2\mathcal{L}}{m}}+\sqrt{\frac{\mathcal{W}^2\mathcal{L}}{n}})+\sqrt{\frac{\log \frac{1}{\delta}}{ m}}+ \sqrt{\frac{\log \frac{1}{\delta}}{ n}}\}
    \end{aligned}
\end{equation*}
where $\mathcal{W} := \max\{\mathcal{W}_{D_X},\mathcal{W}_{D_Y},\mathcal{W}_F,\mathcal{W}_G\}$ and $B:= \max\{B_F, B_G\}$.
  \end{proof}
  Based on Lemma \ref{lemma:est2} and set $\lambda = \frac{1}{B}$, we can get that,
\begin{equation*}
    \begin{aligned}
        L(\hat{F},\hat{G})-L(\tilde{F},\tilde{G})
          &=O(B (\sqrt{\frac{\mathcal{W}^2\mathcal{L}}{m}}+\sqrt{\frac{\mathcal{W}^2\mathcal{L}}{n}}+\sqrt{\frac{\log \frac{1}{\delta}}{ m}}+ \sqrt{\frac{\log \frac{1}{\delta}}{ n}})).
    \end{aligned}
\end{equation*}
So, we complete the proof of Theorem \ref{thm:est2}.

\end{appendices}

\end{document}